\renewcommand\footnotetextcopyrightpermission[1]{}
\begin{document}
\begin{sloppy}
\title{How Is Uncertainty Propagated in Knowledge Distillation?}

\author{Ziyao Cui}
\affiliation{%
  \institution{Duke University}
  \city{Durham}
  \state{North Carolina}
  \country{USA}
}
\email{richard.cui@duke.edu}

\author{Jian Pei}
\affiliation{%
  \institution{Duke University}
  \city{Durham}
  \state{North Carolina}
  \country{USA}
}
\email{j.pei@duke.edu}







\renewcommand{\shortauthors}{Ziyao Cui and Jian Pei}

\begin{abstract}
Knowledge distillation transfers behavior from a teacher to a student model, but the process is inherently stochastic: teacher outputs, student training, and student inference can all be random. Collapsing these uncertainties to a single point estimate can distort what is learned. We systematically study how uncertainty propagates through knowledge distillation across three representative model classes—linear regression, feed-forward neural networks, and large language models (LLMs)—and propose simple corrections. We distinguish \emph{inter-student} uncertainty (variance across independently distilled students) from \emph{intra-student} uncertainty (variance of a single student’s predictive distribution), showing that standard single-response knowledge distillation suppresses intra-student variance while leaving substantial inter-student variability. To address these mismatches, we introduce two variance-aware strategies: \emph{averaging} multiple teacher responses, which reduces noise at rate $\mathcal{O}(1/k)$, and \emph{variance-weighting}, which combines teacher and student estimates via inverse-variance weighting to yield a minimum-variance estimator. We provide formal guarantees in linear regression, validate the methods in neural networks, and demonstrate empirical gains in LLM distillation, including reduced systematic noise and hallucination. These results reframe knowledge distillation as an \emph{uncertainty transformation} and show that variance-aware distillation produces more stable students that better reflect teacher uncertainty.\footnote{Our implementation and evaluation code is available at \url{https://github.com/richardcui18/uncertainty-in-distillation}.}
\end{abstract}



\keywords{Knowledge distillation, model uncertainty, uncertainty propagation, variance-aware distillation, hallucination reduction.}


\maketitle

\section{Introduction}

Knowledge distillation~\cite{hinton2015distilling,kim2016sequence} is a widely used technique for transferring behavior from a high-capacity teacher model to a more efficient student model. It enjoys applications ranging from model compression~\cite{ding2023efficiency,wei2022emergent,treviso2023efficient,xu2023survey,zhu2024survey,zhuang2023survey} to domain adaptation~\cite{tang2024direct,zhang-etal-2021-matching,pan2020meta}. In practice, knowledge distillation enables models to integrate expertise across domains—for example, financial models absorbing legal reasoning, human resource systems combining financial and legal insights~\cite{shi2025knowledge,italiani2025enhancing}, or medical LLMs leveraging knowledge from pharmaceutical and bioinformatics models~\cite{hasan2025optimclm,ge2025multi}.

Much of the knowledge distillation literature emphasizes accuracy-preserving compression: distilled students often match or even exceed teacher performance on benchmarks while being substantially smaller and faster~\cite{beyer2022knowledge,fang2025knowledge,liu2024wisdom}. In this sense, knowledge distillation succeeds. Yet this focus on accuracy obscures a critical and understudied aspect of distillation—\emph{the transformation of uncertainty}. Teacher outputs are often stochastic due to sampling, temperature, or inherent ambiguity; student training depends on random initialization and optimization dynamics; and student inference may itself be probabilistic. Collapsing this distributional behavior to a single sampled response risks discarding information about uncertainty and expressiveness. In safety-critical or decision-sensitive domains such as law, medicine, and finance~\cite{zhu2024survey}, these distortions can have real consequences, including fabricated legal or medical citations, incorrect customer support responses, and inappropriate generalizations.\footnote{Mata v. Avianca, Inc. (2023): \url{https://en.wikipedia.org/wiki/Mata_v._Avianca,_Inc.}; Air Canada chatbot case (2024): \url{https://en.wikipedia.org/wiki/Hallucination_(artificial_intelligence)}; Google AI Overview (2025): \url{https://www.thetimes.com/uk/technology-uk/article/google-ai-overviews-aio-wrong-vs32029z6}}

This tension motivates a central question: does knowledge distillation truly succeed in its promise? In one sense, yes—it reliably improves efficiency and often generalization. In another sense, no—the process transfers only a fraction of the teacher’s distributional knowledge and may suppress the very uncertainties that make teachers expressive. This paradox underlies the central theme of our work: \emph{knowledge distillation works, but not always in the way its name suggests}. We therefore ask: How does knowledge distillation affect model uncertainty? Does it suppress or preserve the diversity and creativity of generative teachers? What tradeoffs arise between uncertainty reduction and fidelity to the teacher?

We address these questions by systematically analyzing how uncertainty propagates through knowledge distillation and by developing simple, principled corrections. We distinguish between two forms of uncertainty: \emph{inter-student uncertainty}, the variance across independently distilled students, and \emph{intra-student uncertainty}, the variance of a single student’s predictive distribution. Ideally, inter-student uncertainty should vanish, reflecting stable training, while intra-student uncertainty should mirror that of the teacher. Instead, we find a consistent mismatch: standard knowledge distillation suppresses intra-student uncertainty while leaving substantial inter-student variability.

Our analysis spans three representative model classes: (i) \emph{linear regression}~\cite{galton1886regression}, which permits closed-form analysis; (ii) \emph{feed-forward neural networks}~\cite{svozil1997introduction}, which test whether linear insights extend to nonlinear, non-convex settings; and (iii) \emph{large language models} (LLMs)~\cite{chang2024survey,huang2022towards,yang2024harnessing,shanahan2024talking}, where sequence-level stochasticity makes uncertainty especially salient. Across these settings, we identify common principles governing how knowledge distillation reshapes uncertainty and propose practical, scalable remedies.

Our contributions are threefold. First, we formalize uncertainty in knowledge distillation by separating inter-student and intra-student components and identifying their sources in the distillation pipeline. Second, we derive precise characterizations of how teacher noise, student initialization, and student noise influence student uncertainty in linear models and validate consistent patterns in neural networks and LLMs. Third, we introduce two variance-aware distillation strategies—averaging and variance-weighting—and prove their optimality in linear setting.

The remainder of the paper is organized as follows. Section~\ref{sec:prob-defn} formulates the problem and Section~\ref{sec:related} reviews related work. Sections~\ref{sec:uncertainty_teacher}, \ref{sec:uncertainty_init}, and~\ref{sec:uncertainty_output} analyze teacher output, student initialization, and student output uncertainty, respectively. We present uncertainty correction methods in Section~\ref{sec:solution} and conclude with discussion and future directions in Section~\ref{sec:con}. Mathematical proofs are deferred to the appendixes.

\section{Problem Definition}
\label{sec:prob-defn}

\emph{Knowledge distillation} is commonly viewed as a procedure in which a student model learns to imitate the behavior of a teacher model. This process is fundamentally stochastic: teacher model outputs may vary across samples, advanced student model training often starts from random initialization and nonconvex optimization dynamics, and generative student models may themselves produce probabilistic outputs. These sources of randomness collectively shape the distributional behavior that student models ultimately inherit. In this section, we formalize the types of uncertainty that arise in the distillation pipeline, describe how they manifest across three representative model families—linear regression, neural networks, and large language models (LLMs)—and establish the uncertainty modeling framework used throughout the remainder of this work.

\subsection{Sources and Types of Uncertainty in Knowledge Distillation}
\label{sec:types-uncertainty}

\begin{figure}[t]
\centering
\includegraphics[width=\linewidth]{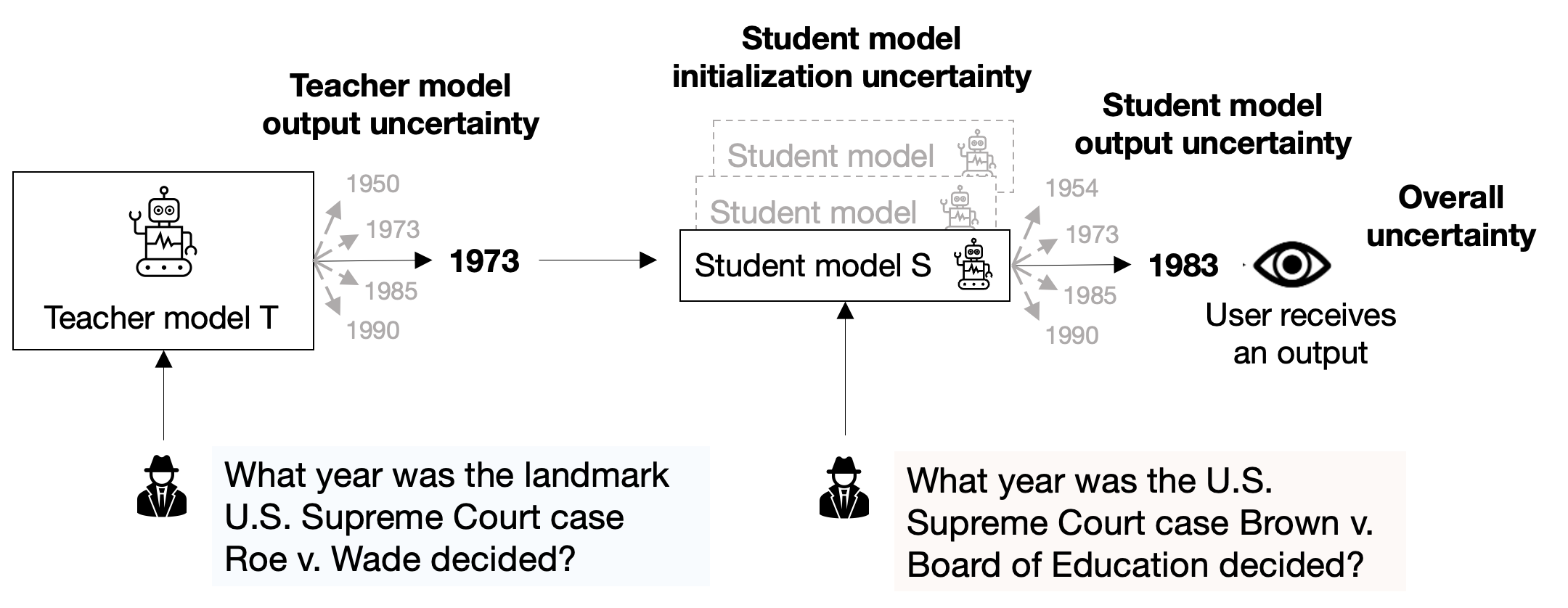}
\caption{Sources of uncertainty in knowledge distillation.}
\label{fig:conceptual}
\end{figure}

Knowledge distillation transfers predictive behavior from a teacher model to a student model. The distillation pipeline contains multiple stochastic components that introduce uncertainty into the student model’s learned behavior.  
As illustrated in Figure~\ref{fig:conceptual}, three primary sources of uncertainty arise:

\begin{itemize}
    \item \textbf{Teacher model output uncertainty:}  
    Many teacher models produce responses stochastically—due to noisy labels, sampling-based generation, temperature scaling, or inherent ambiguity in the input~\cite{evstafev2025paradox,renze-2024-effect,peeperkorn2024temperature}. Thus, each teacher model output is a draw from an underlying predictive distribution, which introduces uncertainty into the supervision signal used during knowledge distillation.
    
    \item \textbf{Student model initialization uncertainty:}  
    Training a sophisticated student model often begins with randomly initialized parameters. For nonlinear architectures such as neural networks and LLMs, different initial seeds can lead the optimization process toward different local minima. Consequently, two student models trained from identical data and supervision may still differ substantially.
    
    \item \textbf{Student model output uncertainty:}  
    Even after training, some student models—particularly generative LLMs—produce outputs via sampling from a predictive distribution. This yields stochastic output behavior, distinct from instance-level variability.
\end{itemize}

These sources fall into two broader types:  
\textbf{model output uncertainty}, the variability in predictions for a fixed model instance, and  
\textbf{model instance uncertainty}, variability across independently trained model instances.
These uncertainty sources manifest differently across model families studied in this paper.

\subsubsection{Teacher model output uncertainty}  

In linear regression and feed-forward neural networks, teacher model output uncertainty is introduced explicitly via additive noise at inference time: \(y_i^{(T)} = f_T(x_i) + \varepsilon_i\). This modeling choice ensures that teacher output uncertainty has a consistent definition across models considered.

In LLMs, teacher model output uncertainty is intrinsic: for a fixed prompt \(\mathbf{x}\), the teacher model defines a predictive distribution \(p_T(\mathbf{y} \mid \mathbf{x})\) over diverse linguistic responses. Sampling strategies such as temperature scaling further modulate this uncertainty. Standard knowledge distillation pipelines typically use only one sampled teacher model response as supervision, collapsing this distribution and discarding valuable information about uncertainty.

\subsubsection{Student model initialization uncertainty}  
Linear regression models exhibit no such uncertainty: the ordinary least-squares (OLS) objective is convex with a unique closed-form solution.

Neural networks and LLMs, in contrast, optimize highly non-convex objectives. Random initialization can cause two student models trained under identical supervision to converge to distinct parameter configurations, particularly when the distillation signal is weak. This variability directly contributes to inter-student variance.

\subsubsection{Student model output uncertainty}  
For linear regression and deterministic neural networks, once parameters are fixed, the model output is deterministic.  
LLMs, however, remain stochastic after training—sampling from student predictive distribution \(p_S(\mathbf{y} \mid \mathbf{x})\) reflects intrinsic model uncertainty and influences downstream variability. Ideally, this predictive distribution should align with that of the teacher model, but systematic mismatches often remain.

These uncertainty sources drive the systematic patterns analyzed in Sections~\ref{sec:uncertainty_teacher}, \ref{sec:uncertainty_init}, and~\ref{sec:uncertainty_output}.

\subsubsection{Problem statement}  
In this paper, we seek to understand how these different sources of uncertainty—teacher model output uncertainty, student model initialization uncertainty, and student model output uncertainty—interact and accumulate throughout the knowledge distillation pipeline. Our goal is to characterize how each source contributes to the overall uncertainty exhibited by distilled student models, and to understand when and why the student model may deviate from the teacher model’s predictive behavior. This problem formulation motivates the representative model-specific distillation setups introduced in the following subsections, which provide concrete settings in which these uncertainty mechanisms can be analyzed systematically.

\subsection{Linear Regression and Distillation}
\label{sec:prelim_models}

Linear regression models are inherently deterministic: for a fixed dataset, the OLS solution yields a unique optimal parameter vector. To use linear regression as a controlled setting for studying uncertainty in knowledge distillation, we introduce an explicit stochastic term into the teacher model.

Given an input \(x_i \in \mathbb{R}^p\), the linear regression teacher model's stochastic response is
\(
y^{(T)}_i = f_T(x_i) + \varepsilon_i\),
where $
\varepsilon_i \overset{\text{i.i.d.}}{\sim} \mathcal{N}(0, \sigma_T^2)$, \(f_T(x_i) = x_i^\top \theta_T\) is the deterministic teacher model output, and \(\sigma_T^2\) controls the injected teacher noise. The resulting distillation dataset is \(\{(x_i, y_i^{(T)})\}_{i=1}^n\).

The student model implements the linear mapping
$
f_S(x_i) = x_i^\top \theta_S$,
and is trained by minimizing the mean-squared distillation loss
$
\mathcal{L}_{\mathrm{distill}}^{\mathrm{lin}}(\theta_S)
    = \frac{1}{n}\sum_{i=1}^n \bigl(y_i^{(T)} - x_i^\top \theta_S\bigr)^{2}$.

The OLS estimator has a closed form
$
\widehat{\theta}_S = (X^\top X)^{-1} X^\top y^{(T)}$,
with variance
$
\operatorname{Var}(\widehat{\theta}_S) = \sigma_T^2 (X^\top X)^{-1}
$.
Thus, the variance of the student model parameters scales linearly with teacher model output noise.

To assess predictive accuracy, we use the evaluation mean-squared error (MSE):
\(
\mathcal{L}_{\mathrm{eval}}
   = \frac{1}{n} \sum_{i=1}^{n}
      \bigl( f_S(x_i^{\mathrm{test}}) - y_i^{\mathrm{test}} \bigr)^{2}
\).

\subsection{Neural Networks and Distillation}

In neural network settings, both teacher and student models define nonlinear mappings of the form
\(
f(x_i;\theta)
   = \phi_L\!\big(W_L\, \phi_{L-1}(\cdots \phi_1(W_1 x_i))\big)
\),
where \(\{W_\ell\}\) are layer weight matrices and \(\{\phi_\ell\}\) are activation functions.  
The teacher model outputs
\(
y_i^{(T)} = f_T(x_i) + \varepsilon_i
\),
and the student model minimizes the corresponding mean-squared distillation loss
\(
\mathcal{L}_{\mathrm{distill}}^{\mathrm{nn}}(\theta_S)
   = \frac{1}{n}\sum_{i=1}^{n}
      \bigl\| f(x_i;\theta_S) - y_i^{(T)} \bigr\|^{2}\).

Because the objective is non-convex, no closed-form solution exists. Student model parameters are optimized using full-batch gradient descent with the Adam optimizer~\cite{DBLP:journals/corr/KingmaB14}. Random initialization introduces substantial instance-level uncertainty, as independently trained student models may converge to different local minima. The predictive accuracy is assessed using the evaluation MSE $\mathcal{L}_{\mathrm{eval}}$.

\subsection{LLMs and Distillation}
\label{sec:llm prelim}

We further study uncertainty in LLMs under the free-form Question Answering (QA) setting~\cite{voorhees-tice-2000-trec,yue2025survey} within a sequence-level knowledge distillation framework~\cite{kim2016sequence}. Free-form QA involves generating linguistically rich and probabilistic responses, making uncertainty particularly salient.

An autoregressive LLM defines
\(
p(\mathbf{y} \mid \mathbf{x})
    = \prod_{t=1}^{T} P(\mathbf{y}_t \mid \mathbf{y}_{<t}, \mathbf{x}; \boldsymbol{\theta})
\),
with training performed using cross-entropy on pairs \((\mathbf{x}, \mathbf{y}^{\text{true}})\).  
During generation, sampling strategies such as temperature scaling introduce additional stochasticity.

To conduct sequence-level knowledge distillation, let \(\mathbf{x} \in \mathcal{X}\) be an input prompt and \(\mathcal{Y}\) the space of possible completions.  
The teacher model defines a sequence distribution
\(
p_T(\mathbf{y} \mid \mathbf{x})
    = \prod_{j=1}^{J} p_T(\mathbf{y}_j \mid \mathbf{y}_{<j}, \mathbf{x})
\),
and the student model parameterized by \(\boldsymbol{\theta_S}\) aims to match this distribution by solving
$
\boldsymbol{\theta_S^*}
    = \arg\min_{\boldsymbol{\theta_S}}
        \mathbb{E}_{\mathbf{x}} \Big[
            - \!\sum_{\mathbf{y}\in\mathcal{Y}}
            p_T(\mathbf{y}\mid \mathbf{x})
            \log p_S(\mathbf{y}\mid \mathbf{x};\boldsymbol{\theta_S})
        \Big]
$.

Since computing exact \(p_T(\mathbf{y} \mid \mathbf{x})\) is infeasible, standard knowledge distillation~\cite{kim2016sequence} samples a single teacher model output \(\mathbf{y}^*\) and minimizes the surrogate loss
$
-\log p_S(\mathbf{y}^* \mid \mathbf{x}; \boldsymbol{\theta_S})
$.


To quantify similarity and variability, we embed responses using SBERT~\cite{reimers-2019-sentence-bert}.  
Given embeddings \(e(\mathbf{y})\), cosine similarity is
$
\text{CosSim}(\mathbf{y}_i, \mathbf{y}_j)
    = \frac{e(\mathbf{y}_i)\cdot e(\mathbf{y}_j)}
           {\|e(\mathbf{y}_i)\|\,\|e(\mathbf{y}_j)\|}
$.

For teacher model response \(\mathbf{y}_T\) and student model responses \(\mathbf{y}_1,\ldots,\mathbf{y}_s\), we compute average alignment $\frac{1}{s}\sum_{i=1}^s \text{CosSim}(\mathbf{y}_T,\mathbf{y}_i)$ and inter-student variance $\operatorname{Var}\!\big(\text{CosSim}(\mathbf{y}_T,\mathbf{y}_i)\big)$.

\section{Related Work}
\label{sec:related}

Knowledge distillation has been widely studied as a method for transferring behavior from a high-capacity teacher model to a more efficient student model~\cite{hinton2015distilling,kim2016sequence}. Much of the prior literature focuses on model compression and transfer learning, demonstrating that distillation can preserve or even improve accuracy~\cite{beyer2022knowledge,fang2025knowledge,liu2024wisdom,tian2025beyond,khanuja2021mergedistill}. Beyond accuracy, research has examined properties inherited through distillation, including privacy~\cite{cui2025membership,jagannatha2021membership,chourasia2021knowledge,tang2022mitigating} and hallucination behavior~\cite{mcdonald2024reducing,nguyen2025smoothing,lewis2025winning}. These effects, however, can be viewed as downstream consequences of a more fundamental issue: the uncertainty present in the distillation pipeline. Our work studies this underlying factor directly, seeking to identify the sources of uncertainty and understand how they shape the behavior of the student models.

A separate line of research investigates the relationship between student model generalization and fidelity to the teacher model in the context of LLM distillation. Prior work has shown that high student model accuracy does not guarantee high agreement with the teacher model, and that fidelity is closely related to calibration~\cite{stanton2021does}. Other studies find that even when accuracy is maintained, the student model may fail to preserve structured reasoning patterns exhibited by the teacher model~\cite{ramesh2025generalization}. These analyses primarily focus on next-token prediction, which provides only a local notion of fidelity. In contrast, our study evaluates fidelity at the level of complete generated responses and connects fidelity to the propagation of uncertainty through distillation.

Recent work also examines how properties of the teacher model itself influence the outcome of distillation. Larger teacher models do not always produce better student models~\cite{cho2019efficacy}; teacher model accuracy alone is not a definitive predictor of student model quality~\cite{ramesh2025generalization}; and teacher models that are more stable or trained for longer tend to produce more reliable student models~\cite{beyer2022knowledge}. These studies underscore the importance of teacher model characteristics but address only part of the problem. Regardless of the teacher model’s strengths, every student model must pass through the distillation process, which introduces its own systematic distortions. Understanding the distillation process as a transformation of uncertainty is therefore essential: it determines not only what the student model learns correctly but also how the student model’s uncertainty may diverge from that of the teacher model.

\section{Teacher Model Output Uncertainty}
\label{sec:uncertainty_teacher}

Teacher model output uncertainty is the most direct and fundamental source of randomness in the distillation pipeline. Whenever the teacher produces noisy or probabilistic outputs, this uncertainty propagates to students through the supervision signal. Understanding this propagation is essential: it determines how much distributional variability the student inherits, how much is lost, and how strongly predictions diverge across independent distillation runs.

In this section, we study this effect analytically and empirically. We use \textbf{inter-student variance} to quantify how much independently trained students differ in their predictions when exposed to noisy teacher outputs. We begin with linear regression, where the relationship between teacher noise and student variance admits exact characterization, and then verify that the same principle extends to neural networks. Our results reveal a simple but powerful rule: \emph{inter-student variance grows linearly with the teacher model’s output noise, and this scaling persists across model families}. We conclude by examining teacher output uncertainty in LLMs.

\subsection{Analytical Results in Linear Regression}

In linear regression distillation, the student learns from noisy teacher responses of the form $y^{(T)}_i = x_i^\top \theta_T + \varepsilon_i$, where $\varepsilon_i \overset{\mathrm{i.i.d.}}{\sim} \mathcal{N}(0,\sigma_T^2)$. The student minimizes a squared loss, yielding the OLS estimator $\widehat{\theta}_S = (X^\top X)^{-1}X^\top y^{(T)}$. Because noise enters linearly into this estimator, the student’s parameters—and thus its predictions—inherit variance proportional to the teacher’s output variance.

Let $\{f_{S_1},\dots,f_{S_p}\}$ denote $p$ independently trained student models. For a test set $\{x_1,\ldots,x_n\}$, we define the \textbf{inter-student variance} as $\mathcal{V}_{\mathrm{inter}} = \frac{1}{n}\sum_{i=1}^{n}\operatorname{Var}_{j=1,\ldots,p}[f_{S_j}(x_i)]$.

\begin{theorem}[Inter-student variance scales with teacher noise]
\label{thm:inter_student_variance}
In linear regression distillation,
$\mathbb{E}\big[\mathcal{V}_{\mathrm{inter}}\big] \propto \sigma_T^2$.
\qed
\end{theorem}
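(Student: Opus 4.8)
The plan is to exploit the closed form of the OLS student estimator to factor the teacher noise cleanly out of every cross-student quantity. For the $j$-th distillation run, write the teacher labels as $y^{(T,j)} = X\theta_T + \varepsilon^{(j)}$ with $\varepsilon^{(j)} \overset{\text{i.i.d.}}{\sim}\mathcal{N}(0,\sigma_T^2 I_n)$, drawn independently across $j=1,\dots,p$. Since the OLS solution is a deterministic function of the data, the \emph{sole} source of randomness separating the $p$ students is this independent noise draw, and $\widehat{\theta}_{S_j} = (X^\top X)^{-1}X^\top y^{(T,j)} = \theta_T + (X^\top X)^{-1}X^\top \varepsilon^{(j)}$, so that $f_{S_j}(x_i) = x_i^\top\theta_T + x_i^\top (X^\top X)^{-1}X^\top \varepsilon^{(j)}$. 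The deterministic term $x_i^\top\theta_T$ is common to all students and therefore contributes nothing to any variance across $j$; only the term linear in $\varepsilon^{(j)}$ survives.

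First I would compute the population variance across students at a fixed test input $x_i$, conditional on $X$. The variables $\{x_i^\top (X^\top X)^{-1}X^\top \varepsilon^{(j)}\}_{j=1}^p$ are i.i.d.\ mean-zero Gaussians with variance $\sigma_T^2\, x_i^\top (X^\top X)^{-1} x_i$, using $\operatorname{Cov}(\varepsilon^{(j)}) = \sigma_T^2 I_n$ and $(X^\top X)(X^\top X)^{-1} = I_p$. Next I would relate the empirical inter-student variance $\operatorname{Var}_{j=1,\dots,p}[f_{S_j}(x_i)]$ to this population variance: because the $p$ summands are i.i.d., the sample variance is an (un)biased estimator of the population variance, giving $\mathbb{E}\big[\operatorname{Var}_{j}[f_{S_j}(x_i)]\big] = c_p\,\sigma_T^2\, x_i^\top (X^\top X)^{-1} x_i$, where $c_p$ is $1$ or $(p-1)/p$ depending on the chosen normalization and, crucially, does not depend on $\sigma_T$.

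Then I would average over the $n$ test points and apply linearity of expectation to obtain $\mathbb{E}[\mathcal{V}_{\mathrm{inter}}] = c_p\,\sigma_T^2\cdot \tfrac{1}{n}\sum_{i=1}^{n} x_i^\top (X^\top X)^{-1} x_i$. The leading factor $C := \tfrac{c_p}{n}\sum_{i=1}^{n} x_i^\top (X^\top X)^{-1} x_i$ depends only on the design matrix and the test inputs (for instance, when the test set coincides with the training set it reduces to $c_p\,p/n$ via $\operatorname{tr}((X^\top X)^{-1}X^\top X) = p$), and is in particular independent of $\sigma_T$; if one additionally averages over a random design, $C$ is simply replaced by its expectation, which remains $\sigma_T$-free. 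Hence $\mathbb{E}[\mathcal{V}_{\mathrm{inter}}] = C\,\sigma_T^2 \propto \sigma_T^2$.

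The only real subtlety is bookkeeping: stating precisely what is held fixed versus averaged over (the noise draws always; the design matrix optionally), and tracking that $\operatorname{Var}_j$ denotes the sample variance of $p$ realizations rather than a population quantity, so that the constant $c_p$ is handled correctly. Everything else follows directly from the linearity of the OLS map in the noise and the independence of the $p$ runs; no concentration inequalities or asymptotics in $n$ or $p$ are needed.
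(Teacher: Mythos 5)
Your proposal is correct and follows essentially the same route as the paper's proof: exploit the linearity of the OLS map in the teacher noise to get $\operatorname{Var}[\widehat{f}_S(x_i)] = \sigma_T^2\, x_i^\top (X^\top X)^{-1} x_i$ and average over test inputs. You are in fact slightly more careful than the paper, which silently identifies the empirical variance across the $p$ finite students with the population variance; your explicit handling of the sample-variance normalization constant $c_p$ (which is $\sigma_T$-free and hence harmless for the proportionality claim) closes that small gap.
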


Teacher model noise not only increases inter-student variance but also degrades the accuracy of individual students.

\begin{theorem}[Expected MSE relative to teacher model output grows linearly with teacher noise]
\label{thm:teacher_output_mse}
Let \(
\mathcal{L}_{\mathrm{eval}}^{(T)}
= \frac{1}{n}\sum_{i=1}^n \bigl(f_S(x_i^{\mathrm{test}}) - f_T(x_i^{\mathrm{test}})\bigr)^2
\)
denote the evaluation MSE relative to the teacher model output.
If $y^{(T)} = X\theta_T + \varepsilon$ with $\varepsilon\sim\mathcal{N}(0,\sigma_T^2 I)$, then
\[
\mathbb{E}[\mathcal{L}_{\mathrm{eval}}^{(T)}]
    = \sigma_T^2 \cdot \frac{1}{n}
      \operatorname{tr}\!\big( X_{\mathrm{test}} (X^\top X)^{-1} X_{\mathrm{test}}^\top \big),
\]
which is linear in $\sigma_T^2$.
\qed
\end{theorem}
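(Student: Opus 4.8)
The plan is to reduce the quantity to a Gaussian quadratic form in the teacher noise and evaluate its expectation. First I would condition on the fixed training design $X$ and test design $X_{\mathrm{test}}$, so that the only source of randomness is $\varepsilon$. Substituting $y^{(T)} = X\theta_T + \varepsilon$ into the OLS estimator gives $\widehat{\theta}_S - \theta_T = (X^\top X)^{-1}X^\top(X\theta_T+\varepsilon) - \theta_T = (X^\top X)^{-1}X^\top\varepsilon$, where the deterministic term $\theta_T$ cancels exactly because $(X^\top X)^{-1}X^\top X = I$ (this uses invertibility of $X^\top X$, already assumed for the closed-form estimator). This is the key algebraic simplification: the student's estimation error is a linear image of the noise.

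Next I would push this through the evaluation functional. Stacking the test inputs into $X_{\mathrm{test}}$ and writing $f_S(x_i^{\mathrm{test}}) = (x_i^{\mathrm{test}})^\top\widehat{\theta}_S$ and $f_T(x_i^{\mathrm{test}}) = (x_i^{\mathrm{test}})^\top\theta_T$, we get $\mathcal{L}_{\mathrm{eval}}^{(T)} = \tfrac{1}{n}\|X_{\mathrm{test}}(\widehat{\theta}_S-\theta_T)\|^2 = \tfrac{1}{n}\varepsilon^\top M^\top M\varepsilon$ with $M = X_{\mathrm{test}}(X^\top X)^{-1}X^\top$. I would then invoke the standard identity $\mathbb{E}[\varepsilon^\top B\varepsilon] = \sigma_T^2\operatorname{tr}(B)$, valid for any fixed symmetric $B$ since it only uses $\mathbb{E}[\varepsilon\varepsilon^\top] = \sigma_T^2 I$, applied with $B = M^\top M$. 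The cyclic property of the trace then collapses $\operatorname{tr}(M^\top M) = \operatorname{tr}\big(X(X^\top X)^{-1}X_{\mathrm{test}}^\top X_{\mathrm{test}}(X^\top X)^{-1}X^\top\big)$ to $\operatorname{tr}\big(X_{\mathrm{test}}(X^\top X)^{-1}(X^\top X)(X^\top X)^{-1}X_{\mathrm{test}}^\top\big) = \operatorname{tr}\big(X_{\mathrm{test}}(X^\top X)^{-1}X_{\mathrm{test}}^\top\big)$, which yields the claimed formula after dividing by $n$.

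Finally, since $X$ and $X_{\mathrm{test}}$ are held fixed, the trace factor is a deterministic constant, so the right-hand side is manifestly proportional to—and in particular linear in—$\sigma_T^2$, establishing the theorem. I do not expect a genuine obstacle here; the only care needed is bookkeeping: ensuring the $\theta_T$ term cancels cleanly, tracking which matrices are transposed inside the trace, and stating explicitly that the expectation is over $\varepsilon$ alone. If one instead wished to treat $X$ or $X_{\mathrm{test}}$ as random, one would simply take an outer expectation over the designs, and the linear dependence on $\sigma_T^2$ would persist.
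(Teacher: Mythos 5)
Your proposal is correct and follows essentially the same route as the paper's proof: substitute the noisy teacher labels into the OLS estimator so that $\widehat{\theta}_S-\theta_T=(X^\top X)^{-1}X^\top\varepsilon$, express $\mathcal{L}_{\mathrm{eval}}^{(T)}$ as the Gaussian quadratic form $\tfrac{1}{n}\varepsilon^\top M^\top M\varepsilon$, and evaluate the expectation via $\mathbb{E}[\varepsilon\varepsilon^\top]=\sigma_T^2 I$ and the cyclic property of the trace. No gaps; your added remarks on conditioning on the designs and on the invertibility of $X^\top X$ only make explicit what the paper leaves implicit.
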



While Theorem~\ref{thm:teacher_output_mse} characterizes the expected error of the student relative to the teacher output, practical evaluation is typically performed against ground-truth labels. The following theorem analyzes the student's expected MSE relative to the ground truth.

\begin{theorem}[Expected MSE relative to ground truth grows with teacher noise]
\label{thm:ground_truth_mse}
Assume the ground-truth targets are \(y^{\mathrm{true}} = X_{\mathrm{test}}\theta^* + \eta\), where $\eta\overset{\text{i.i.d.}}{\sim}\mathcal{N}(0,\sigma_{\eta}^2 I_n)$. Then,
\[
\mathbb{E}[\mathcal{L}_{\mathrm{eval}}]
  = \frac{1}{n}\|X_{\mathrm{test}}(\theta_T-\theta^*)\|^2
  + \sigma_T^2\,\frac{1}{n}\operatorname{tr}(X_{\mathrm{test}}(X^\top X)^{-1}X_{\mathrm{test}}^\top)
  + \sigma_\eta^2.
\]\qed
\end{theorem}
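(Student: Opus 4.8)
The plan is to expand the student prediction $f_S(x_i^{\mathrm{test}}) = (x_i^{\mathrm{test}})^\top \widehat{\theta}_S$ using the closed-form OLS estimator $\widehat{\theta}_S = (X^\top X)^{-1} X^\top y^{(T)}$ with $y^{(T)} = X\theta_T + \varepsilon$, so that $\widehat{\theta}_S = \theta_T + (X^\top X)^{-1} X^\top \varepsilon$. Substituting the ground-truth model $y^{\mathrm{true}} = X_{\mathrm{test}}\theta^* + \eta$, the evaluation residual becomes $X_{\mathrm{test}}\widehat{\theta}_S - y^{\mathrm{true}} = X_{\mathrm{test}}(\theta_T - \theta^*) + X_{\mathrm{test}}(X^\top X)^{-1}X^\top \varepsilon - \eta$. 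The quantity $\mathcal{L}_{\mathrm{eval}} = \frac{1}{n}\|X_{\mathrm{test}}\widehat{\theta}_S - y^{\mathrm{true}}\|^2$ is then the squared norm of a sum of three terms: a deterministic bias term, a $\varepsilon$-dependent term, and an $\eta$-dependent term.

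First I would take the expectation over both independent noise sources $\varepsilon \sim \mathcal{N}(0,\sigma_T^2 I)$ and $\eta \sim \mathcal{N}(0,\sigma_\eta^2 I_n)$. Expanding $\|a + b - c\|^2 = \|a\|^2 + \|b\|^2 + \|c\|^2 + 2a^\top b - 2a^\top c - 2b^\top c$ with $a = X_{\mathrm{test}}(\theta_T-\theta^*)$, $b = X_{\mathrm{test}}(X^\top X)^{-1}X^\top \varepsilon$, $c = \eta$, all three cross terms vanish in expectation: $\mathbb{E}[b] = 0$ and $\mathbb{E}[c] = 0$ by zero-mean, and $\mathbb{E}[b^\top c] = 0$ by independence of $\varepsilon$ and $\eta$. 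This leaves $\mathbb{E}[\mathcal{L}_{\mathrm{eval}}] = \frac{1}{n}\|X_{\mathrm{test}}(\theta_T-\theta^*)\|^2 + \frac{1}{n}\mathbb{E}\|b\|^2 + \frac{1}{n}\mathbb{E}\|\eta\|^2$.

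Next I would evaluate the two remaining expectations. For the $\eta$ term, $\mathbb{E}\|\eta\|^2 = \operatorname{tr}(\operatorname{Cov}(\eta)) = n\sigma_\eta^2$, giving $\frac{1}{n}\mathbb{E}\|\eta\|^2 = \sigma_\eta^2$. For the $\varepsilon$ term, using $\mathbb{E}\|Mv\|^2 = \operatorname{tr}(M \operatorname{Cov}(v) M^\top)$ with $M = X_{\mathrm{test}}(X^\top X)^{-1}X^\top$ and $\operatorname{Cov}(\varepsilon) = \sigma_T^2 I$, we get $\mathbb{E}\|b\|^2 = \sigma_T^2 \operatorname{tr}\big(X_{\mathrm{test}}(X^\top X)^{-1}X^\top X (X^\top X)^{-1}X_{\mathrm{test}}^\top\big) = \sigma_T^2 \operatorname{tr}\big(X_{\mathrm{test}}(X^\top X)^{-1}X_{\mathrm{test}}^\top\big)$, where the middle simplification uses $X^\top X (X^\top X)^{-1} = I$. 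Dividing by $n$ and collecting the three pieces yields exactly the claimed identity; this step essentially reuses the computation already carried out in Theorem~\ref{thm:teacher_output_mse}.

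I do not anticipate a genuine obstacle here, as the argument is a standard bias–variance–noise decomposition. The one point requiring slight care is the bookkeeping of which randomness is being averaged over: $\theta_T$, $\theta^*$, $X$, and $X_{\mathrm{test}}$ are all treated as fixed (conditioned upon), and the expectation is only over the two injected Gaussian noise vectors $\varepsilon$ (teacher noise used to build the distillation targets) and $\eta$ (ground-truth label noise on the test set). Making the independence of these two noise sources explicit is what kills the cross term $b^\top c$ and guarantees the clean additive form; once that is stated, the rest is the routine trace algebra above.
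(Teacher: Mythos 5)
Your proposal is correct and follows essentially the same route as the paper's proof: the same decomposition of the residual into the bias term $X_{\mathrm{test}}(\theta_T-\theta^*)$, the propagated teacher-noise term $X_{\mathrm{test}}(X^\top X)^{-1}X^\top\varepsilon$, and the test-label noise $\eta$, with cross terms killed by independence and zero means, and the same trace computation for the quadratic terms. No gaps; the bookkeeping remark about which randomness is averaged over matches the paper's implicit conditioning on $X$, $X_{\mathrm{test}}$, $\theta_T$, and $\theta^*$.
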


These results show that teacher model noise has a dual effect: it increases predictive disagreement across independently distilled students, and degrades the accuracy of every student in expectation. 

\subsection{Empirical Verification in Linear and Neural Models}

To test whether the analytical insights hold in linear regression and extend to nonlinear settings such as neural networks, we conduct experiments on Boston Housing dataset~\cite{harrison1978hedonic} (details in Appendix~\ref{app:datasets_all}). The teacher model is a noisy regressor with variance $\sigma_T^2 = \alpha\,\operatorname{Var}(y)$, and we vary $\alpha \in [0,2]$. For each $\alpha$, we distill 1{,}000 students and measure both mean test MSE and inter-student variance. 

Figure~\ref{fig:simple_uncertainty_1} shows that inter-student variance increases approximately linearly with $\alpha$ in the linear regression setting, consistent with Theorem~\ref{thm:inter_student_variance}. 
In neural networks, the increase is super-linear, reflecting nonlinear amplification of noise beyond the linear regime.
In both cases, student accuracy degrades monotonically as $\alpha$ grows. Together, these results confirm that while the precise scaling differs across model classes, increased teacher noise consistently leads to higher inter-student variability and reduced student performance.


\begin{figure}[t]
\centering
\begin{subfigure}[t]{0.48\linewidth}
    \centering
    \includegraphics[width=\linewidth]{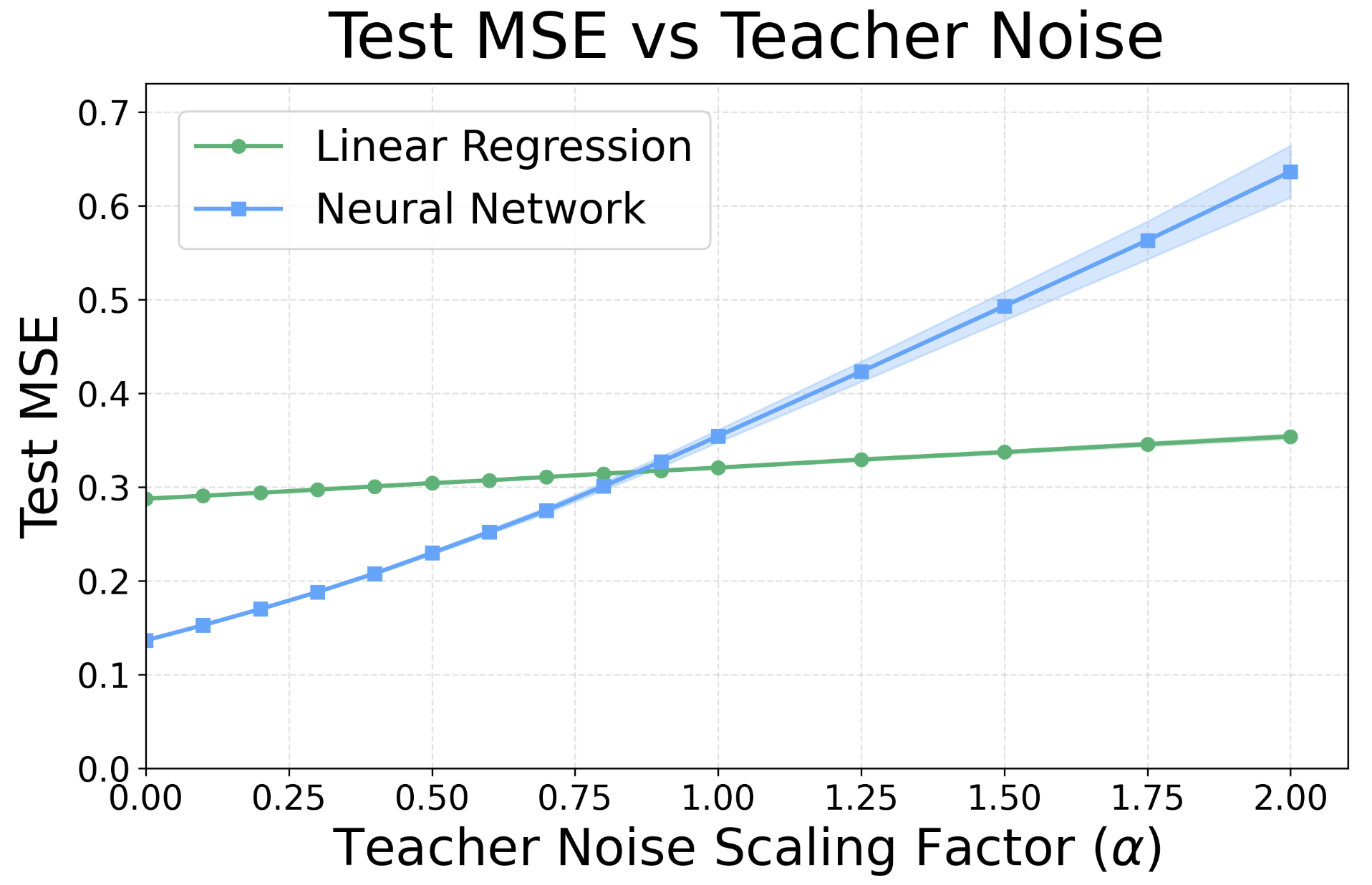}
    \caption{Test MSE.}
    \label{fig:simple_uncertainty_1_mse}
\end{subfigure}
\hfill
\begin{subfigure}[t]{0.48\linewidth}
    \centering
    \includegraphics[width=\linewidth]{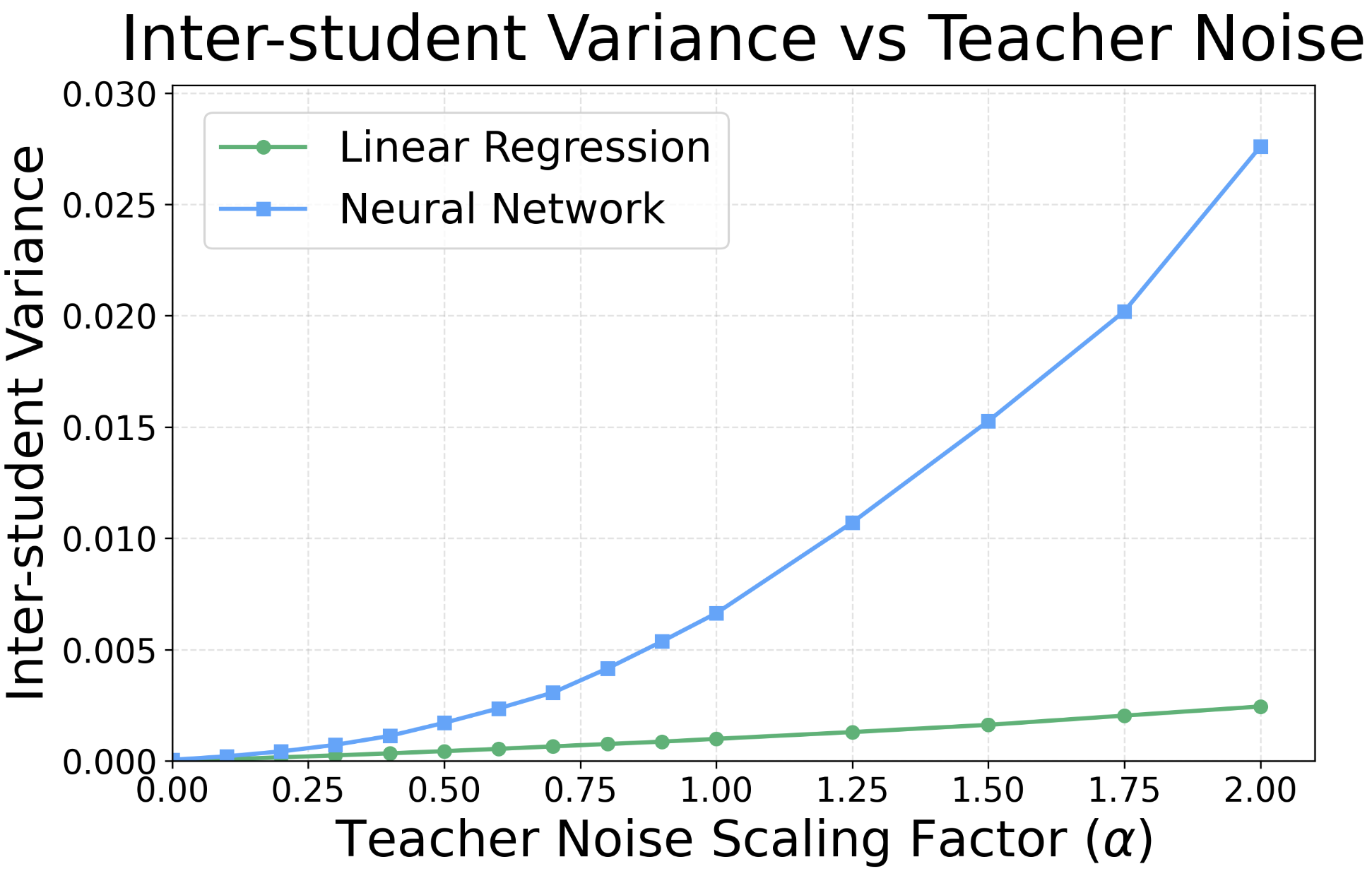}
    \caption{Inter-student variance.}
    \label{fig:simple_uncertainty_1_var}
\end{subfigure}
\caption{
Teacher model output uncertainty in linear regression and neural networks.
}
\label{fig:simple_uncertainty_1}
\end{figure}

\subsection{Teacher Model Output Uncertainty in LLMs}
\label{sec:uncertainty_llm}

Teacher model output uncertainty is especially salient for LLMs. For a single prompt, the teacher model defines a distribution over many plausible responses, yet standard distillation typically compresses this distribution into a single sampled response. This compression fundamentally alters how uncertainty is transferred to the student. 


We distill 10 student models (DistilGPT2~\cite{sanh2019distilbert}) from a GPT2~\cite{radford2019language} teacher on the BioASQ QA benchmark~\cite{krithara2023bioasq} (details in Appendix~\ref{app:datasets_all}). To isolate the effect of teacher output uncertainty, we fix student initialization, vary the teacher model temperature to control output entropy, and evaluate using top-1 decoding for alignment and sampling-based evaluation for uncertainty estimation.



We observe three consistent patterns: \textbf{(1) student–teacher alignment decreases mildly} as teacher model temperature increases, as shown in Figure~\ref{fig:teacher output uncertainty}; \textbf{(2) inter-student variance remains stable} across temperatures (Figure~\ref{fig:teacher output uncertainty}), indicating that teacher sampling entropy alone does not amplify differences between student models when initialization is fixed; and \textbf{(3) student models systematically underestimate predictive variance}, producing distributions that are noticeably narrower than those of the teacher, as illustrated in Figure~\ref{fig:student lower var} (experimental details on predictive variance estimation can be found in Appendix~\ref{app:llm_predictive_distribution}). These behaviors are natural consequences of single-sample supervision: the student is trained on only one draw from a potentially multimodal teacher distribution and therefore learns a single mode rather than the full distribution. We formalize and prove this suppression effect in Appendix~\ref{app:suppression_effect}.





\begin{figure}[t]
  \centering
  \begin{minipage}[t]{0.48\linewidth}
    \centering
    \includegraphics[width=\linewidth]{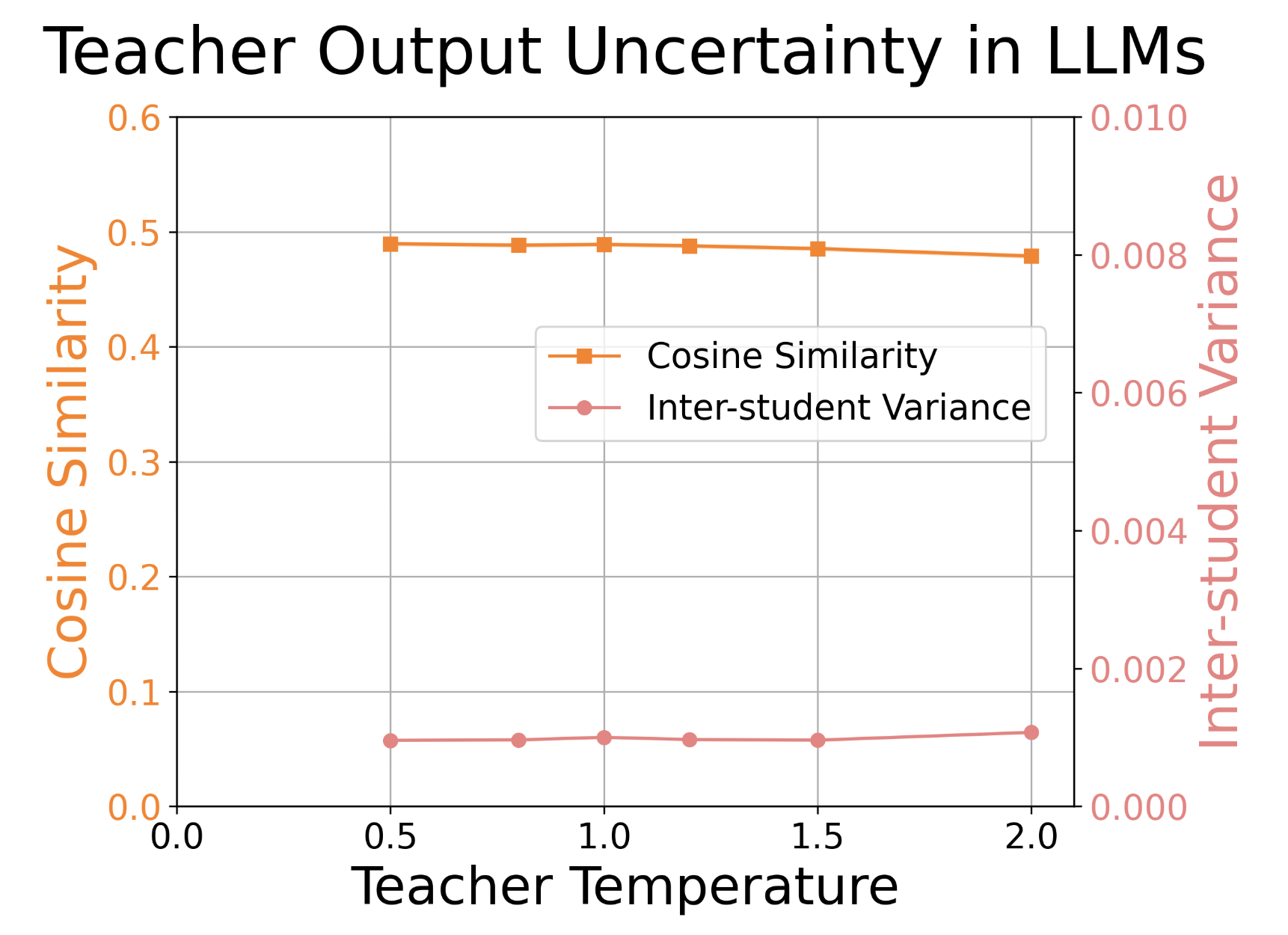}
  \caption{Teacher model output uncertainty in LLMs. 
  }
\label{fig:teacher output uncertainty}
  \end{minipage}%
  \hfill
  \begin{minipage}[t]{0.48\linewidth}
    \centering
    \includegraphics[width=\linewidth]{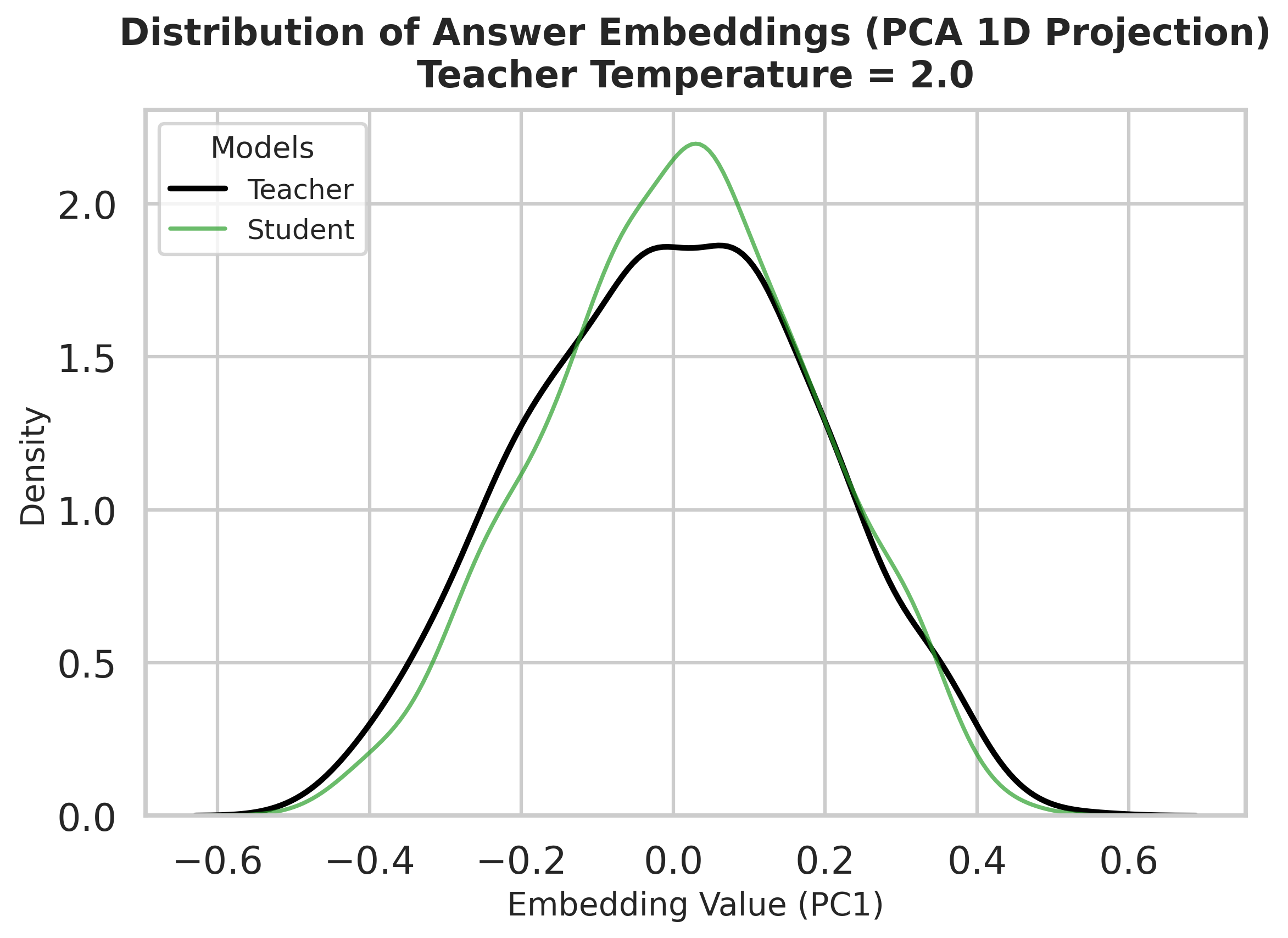}
\caption{Predictive distributions of teacher and student models. 
}
\label{fig:student lower var}
  \end{minipage}
\end{figure}

In summary, teacher model output uncertainty propagates differently in LLMs than in simpler models. While alignment is only modestly affected by teacher model uncertainty, the more important effect is that student models systematically \emph{underestimate} the teacher model’s predictive variance. This results from collapsing a rich teacher distribution into a single sample during training. Consequently, distilled LLMs produce overconfident and narrower predictive distributions, motivating variance-aware distillation methods introduced in Section~\ref{sec:solution}.

\section{Student Model Initialization Uncertainty}
\label{sec:uncertainty_init}

Student model initialization uncertainty captures how much independently trained student models—differing only in their initial parameters or in small perturbations thereof—produce different predictions after knowledge distillation. Below we present a compact, unified treatment that combines a \emph{model-driven} (parameter perturbation) viewpoint with a \emph{data-driven} (bootstrap) viewpoint, summarize theoretical insights in the linear regime, and report concise empirical findings for neural networks and LLMs.

\subsection{Model-Driven Approach: Parameter Perturbations}
\label{sec:init-model-driven}

The model-driven approach probes student model initialization uncertainty by perturbing initial parameters and then retraining or fine-tuning the student model while holding the training data and supervision fixed. Linear regression is deterministic with a closed-form OLS solution and therefore does not exhibit parameter-driven variability. Parameter perturbations are meaningful only for models trained by iterative optimization (e.g., neural networks and LLMs).

For small feed-forward student models trained from scratch under a deterministic teacher model (i.e., teacher model noise $\alpha=0$), we use Kaiming initialization~\cite{he2015delving} and inject multiplicative Gaussian perturbations to the weights, $\tilde{W}_{ij} = W_{ij}(1+\epsilon_{ij})$ with $\epsilon_{ij}\sim\mathcal{N}(0,\sigma_{\mathrm{init}}^2)$, varying $\sigma_{\mathrm{init}}\in[0,0.4]$. Figure~\ref{fig:nn_student_init_uncertainty} summarizes the results: both the mean test MSE and its variance remain largely stable.
This suggests that, for compact feed-forward architectures under a strong (deterministic) distillation signal, small random perturbations of the initial parameters have limited practical effect.


\begin{figure}[t]
  \centering
  \begin{minipage}[t]{0.48\linewidth}
    \centering
    \includegraphics[width=\linewidth]{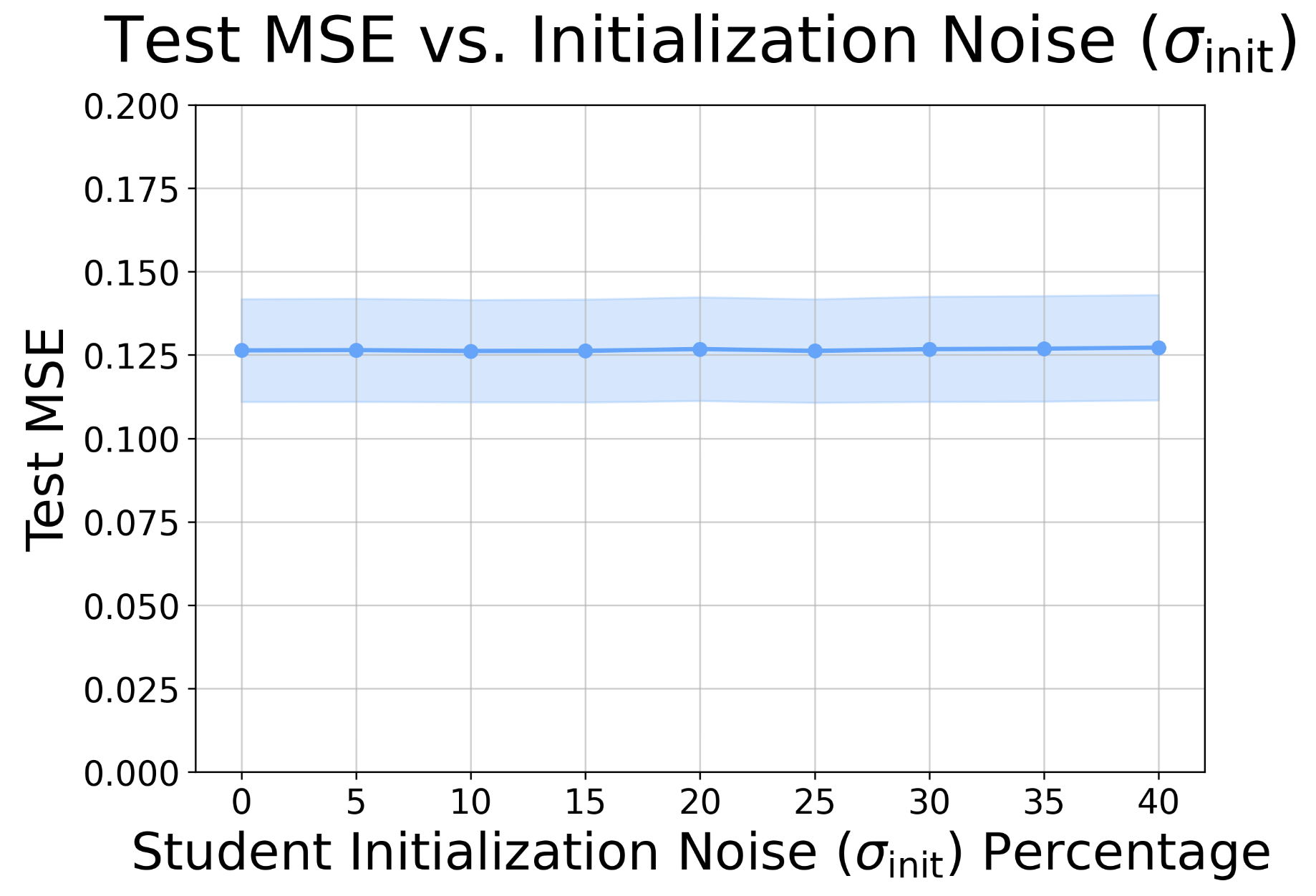}
\caption{Neural network student models: mean test MSE (blue) and inter-student variance (shaded) vs.\ initialization noise.
}
\label{fig:nn_student_init_uncertainty}
  \end{minipage}%
  \hfill
  \begin{minipage}[t]{0.48\linewidth}
    \centering
    \includegraphics[width=\linewidth]{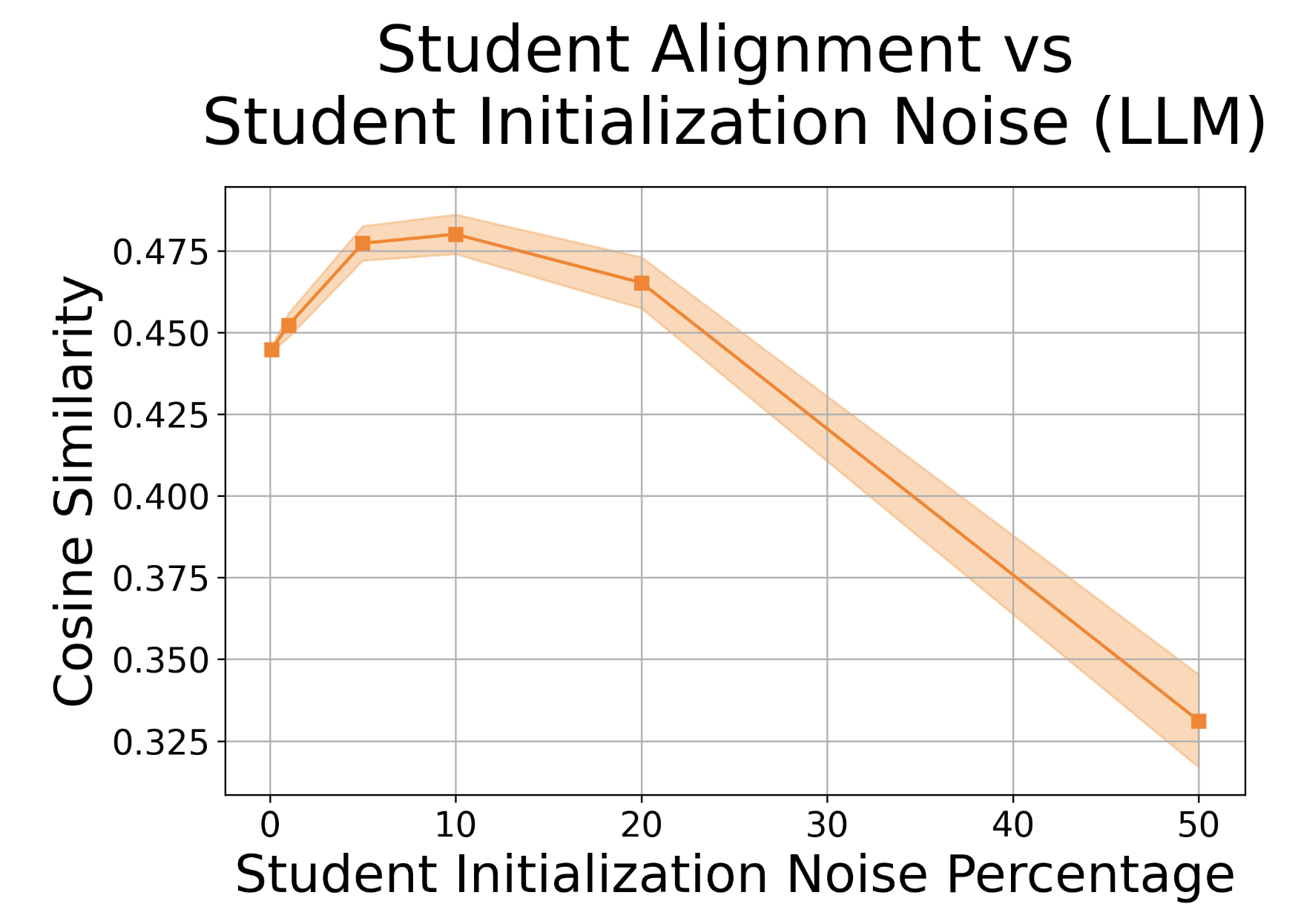}
  \caption{LLM student models: teacher model alignment (orange) and inter-student variance (shaded) vs.\ initialization noise.}
  \label{fig:student_init_uncertainty}
  \end{minipage}
\end{figure}

For pre-trained LLM student models (DistilGPT2 fine-tuned via knowledge distillation from GPT2), we study \emph{local} initialization sensitivity by adding scaled noise to the pre-trained parameters and then fine-tuning. To isolate initialization effects, we fix the teacher model supervision (top-1 teacher model outputs) and evaluate student model alignment and predictive variability. Figure~\ref{fig:student_init_uncertainty} shows that LLM student models are far more sensitive: even small perturbations can reduce alignment with the teacher model and substantially increase inter-student variance. 
This indicates that, for LLMs and relatively weak distillation signals, the initial parameters can determine which basin the optimizer enters, yielding pronounced initialization-driven uncertainty. Additional analysis of optimization trajectories, which provides further insight into this behavior, is deferred to Appendix~\ref{app:param_trajectories}.



In short, the model-driven analysis shows a clear dichotomy: small feed-forward student models are robust to modest initialization perturbations under strong deterministic supervision, whereas large LLM student models are highly sensitive—small local perturbations can redirect fine-tuning toward different optima and markedly increase inter-student variance.

\subsection{Data-Driven Approach: Bootstrap Resampling}
\label{sec:init-bootstrap}

Bootstrap resampling of the training data provides a practical and computationally efficient proxy for probing sensitivity to training data while holding the student model initialization fixed. We study two variants:

\begin{itemize}
  \item \textbf{Teacher-model bootstrap:} construct the teacher-model-labeled dataset $\mathcal{D}_T=\{(x_i,\hat y_{T,i})\}$ with $\hat y_{T,i}=f_T(x_i)$, draw bootstrap replicates $\mathcal{D}_T^{(b)}$, and train student models—all initialized identically—on the bootstrap samples.
  \item \textbf{Ground-truth bootstrap:} draw bootstrap samples $\mathcal{D}^{(b)}$ from the original $(x_i,y_i)$ pairs, and likewise train student models from the same initialization on each replicate.
\end{itemize}

For any test input $x$, the ensemble $\{f_S^{(b)}(x)\}_{b=1}^B$ induces an empirical predictive distribution whose variance provides a purely data-driven measure of student model sensitivity to sampling noise.

\subsubsection{Theoretical results in linear regression}

In OLS regression, the behavior of bootstrap-induced variability admits closed-form characterization:

\begin{theorem}[Degeneracy of teacher-model bootstrap]
\label{thm:teacher_bootstrap_degenerate}
If the teacher model is the OLS estimator $\hat\theta_T=(X^\top X)^{-1}X^\top y$, then for any bootstrap sample of the teacher-labeled dataset, the student-model OLS estimator fitted on the bootstrap replicate equals $\hat\theta_T$. Thus teacher-model bootstrap is degenerate: it induces zero predictive variance.
\qed
\end{theorem}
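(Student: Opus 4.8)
The plan is to exploit the single structural fact that makes the statement work: when the teacher is itself the OLS fit, its labels are \emph{exactly} the fitted values, so the teacher-labeled dataset lies on an exact hyperplane, and so does every sub-sample of it. Writing $\hat\theta_T=(X^\top X)^{-1}X^\top y$, the label attached to $x_i$ is $\hat y_{T,i}=f_T(x_i)=x_i^\top\hat\theta_T$, i.e. the whole label vector is $\hat y_T=X\hat\theta_T$ (the projection of $y$ onto the column space of $X$), and the residuals of the data $\{(x_i,\hat y_{T,i})\}$ against $\hat\theta_T$ are identically zero.

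Next I would fix an arbitrary bootstrap replicate, given by a multiset of indices $\{i_1,\dots,i_n\}$ drawn with replacement, with design matrix $X^{(b)}$ (rows $x_{i_1}^\top,\dots,x_{i_n}^\top$) and target vector $\hat y_T^{(b)}$ (entries $\hat y_{T,i_1},\dots,\hat y_{T,i_n}$). Because the identity $\hat y_{T,i_k}=x_{i_k}^\top\hat\theta_T$ holds row by row, it survives any resampling of rows: $\hat y_T^{(b)}=X^{(b)}\hat\theta_T$, with the \emph{same} coefficient vector. Hence $\|\hat y_T^{(b)}-X^{(b)}\hat\theta_T\|^2=0$, so $\hat\theta_T$ is a global minimizer of the bootstrap least-squares objective. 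When $(X^{(b)})^\top X^{(b)}$ is invertible (the generic situation for $n\ge p$), the minimizer is unique and one reads it off directly: $\hat\theta_S^{(b)}=((X^{(b)})^\top X^{(b)})^{-1}(X^{(b)})^\top\hat y_T^{(b)}=((X^{(b)})^\top X^{(b)})^{-1}(X^{(b)})^\top X^{(b)}\hat\theta_T=\hat\theta_T$.

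Finally, since this holds for every replicate, all bootstrap-trained students carry the identical parameter vector $\hat\theta_T$ and therefore produce identical predictions $x^\top\hat\theta_T$ at every test point; the empirical predictive distribution $\{f_S^{(b)}(x)\}_{b=1}^B$ is a point mass, so its variance is zero. The only point needing care — and the closest thing to an obstacle — is the rank-deficient corner case, where a replicate happens to contain fewer than $p$ distinct rows: there $\hat\theta_S^{(b)}$ is not unique, but $\hat\theta_T$ still lies in the solution set and every solution has the same fitted values on $\mathrm{row}(X^{(b)})$, so for test inputs in $\mathrm{row}(X)$ the zero-variance conclusion is unaffected; alternatively one simply invokes the standing assumption that the bootstrap designs have full column rank. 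Beyond this bookkeeping there is essentially nothing to prove once $\hat y_T=X\hat\theta_T$ is noticed.
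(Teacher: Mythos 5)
Your proposal is correct and follows essentially the same argument as the paper: observe that the teacher labels satisfy $\hat y_{T,i}=x_i^\top\hat\theta_T$ exactly, so any bootstrap replicate satisfies $\hat y_T^{(b)}=X^{(b)}\hat\theta_T$, and the student OLS estimator collapses to $\hat\theta_T$, giving a point-mass predictive distribution. Your additional handling of the rank-deficient case is a minor refinement the paper sidesteps by assuming $X_I^\top X_I$ is invertible, but it does not change the substance of the proof.
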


\begin{theorem}[Ground-truth bootstrap: $1/m$ variance scaling]
\label{thm:groundtruth_bootstrap_variance}
Assume the ground truth dataset is
\(y = X \theta + \eta\), where $\eta\sim\mathcal{N}(0,\sigma^2 I_n)$.
Under standard i.i.d.\ assumptions, the predictive variance from ground-truth bootstrap with sample size $m$ satisfies, for large $m$, $\operatorname{Var}_{\mathrm{boot}}[f_S(x)] \approx \frac{\sigma^2}{m}\,x^\top\Sigma_X^{-1}x$, decaying approximately as $1/m$, where $\Sigma_X$ denotes the second-moment matrix.
\qed
\end{theorem}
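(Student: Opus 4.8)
The plan is to treat the ground-truth (pairs) bootstrap as a first-order perturbation of the ordinary least-squares fit, read off the predictive variance from an influence-function expansion, and then appeal to the bootstrap central limit theorem for least squares to make the asymptotics precise. Condition throughout on the realized ground-truth sample $\{(x_i,y_i)\}_{i=1}^{n}$. A bootstrap replicate $\mathcal{D}^{(b)}$ of size $m$ is encoded by a count vector $(N_1,\dots,N_n)\sim\mathrm{Multinomial}(m;\tfrac1n,\dots,\tfrac1n)$, so the student OLS estimator on the replicate is $\hat\theta^{(b)}=\big(\sum_i N_i x_i x_i^\top\big)^{-1}\sum_i N_i x_i y_i$. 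Writing $\widehat\Sigma^{(b)}=\tfrac1m\sum_i N_i x_i x_i^\top$ and letting $\hat e_i=y_i-x_i^\top\hat\theta$ be the full-sample residuals, substituting $y_i=x_i^\top\hat\theta+\hat e_i$ and using the normal equations yields the exact identity
\[
\hat\theta^{(b)}-\hat\theta \;=\; \big(\widehat\Sigma^{(b)}\big)^{-1}\cdot\frac1m\sum_{i} N_i x_i \hat e_i .
\]
This representation is what I would expand.

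First I would control the two factors separately. By the conditional law of large numbers over the multinomial draws, $\widehat\Sigma^{(b)}\to\tfrac1n\sum_i x_i x_i^\top$ in probability as $m\to\infty$, and $\tfrac1n\sum_i x_i x_i^\top\to\Sigma_X$ for large $n$, so $\big(\widehat\Sigma^{(b)}\big)^{-1}=\Sigma_X^{-1}+o_{P}(1)$. For the score term $S^{(b)}:=\tfrac1m\sum_i N_i x_i\hat e_i$, the normal equations give $\mathbb{E}_b[S^{(b)}]=\tfrac1n\sum_i x_i\hat e_i=0$, and a direct multinomial-covariance computation—where the negative correlations of the $N_i$ cancel against $\sum_i x_i\hat e_i=0$—gives $\operatorname{Var}_b(S^{(b)})=\tfrac1m\cdot\tfrac1n\sum_i \hat e_i^2\,x_i x_i^\top$. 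Under the stated model $y=X\theta+\eta$ with homoscedastic $\eta$ we have $\hat e_i=\eta_i-x_i^\top(\hat\theta-\theta)$ and hence $\mathbb{E}[\hat e_i^2]=\sigma^2(1-h_{ii})$ with $h_{ii}$ the leverage of observation $i$; since $\tfrac1n\sum_i h_{ii}=p/n$ and $\tfrac1n\sum_i\eta_i^2 x_i x_i^\top$ concentrates at $\sigma^2\Sigma_X$, the conditional covariance of $S^{(b)}$ is $\approx\tfrac{\sigma^2}{m}\Sigma_X$ up to lower-order corrections in $p/n$.

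Combining the pieces in sandwich form, and using Slutsky-type arguments to discard the cross terms and the quadratic remainder of $\big(\widehat\Sigma^{(b)}\big)^{-1}$ about $\Sigma_X^{-1}$ (each a product of two $O_{P}(m^{-1/2})$ factors, hence contributing $o(1/m)$ to the variance), I obtain $\operatorname{Var}_{\mathrm{boot}}(\hat\theta^{(b)})\approx\Sigma_X^{-1}\big(\tfrac{\sigma^2}{m}\Sigma_X\big)\Sigma_X^{-1}=\tfrac{\sigma^2}{m}\Sigma_X^{-1}$. Because the student prediction is the linear functional $f_S^{(b)}(x)=x^\top\hat\theta^{(b)}$, the delta method gives $\operatorname{Var}_{\mathrm{boot}}[f_S(x)]=x^\top\operatorname{Var}_{\mathrm{boot}}(\hat\theta^{(b)})\,x\approx\tfrac{\sigma^2}{m}\,x^\top\Sigma_X^{-1}x$, which is the claim. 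A cleaner route to the same conclusion is to invoke Freedman's bootstrap CLT for least squares: conditionally on the data, $\sqrt m(\hat\theta^{(b)}-\hat\theta)$ has the same Gaussian limit as $\sqrt n(\hat\theta-\theta)$, namely $\mathcal{N}(0,\sigma^2\Sigma_X^{-1})$ under homoscedasticity, from which the variance of $x^\top\hat\theta^{(b)}$ follows immediately.

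The main obstacle is making the linearization rigorous at the right rate: I must show that $\sum_i N_i x_i x_i^\top$ is invertible with probability tending to one and that $\big(\widehat\Sigma^{(b)}\big)^{-1}$ concentrates around $\Sigma_X^{-1}$, and that the neglected quadratic term genuinely contributes $o(1/m)$ rather than $O(1/m)$ to the variance. Both are standard but require uniform moment control on the covariates plus a matrix concentration inequality for sampling with the (negatively associated) multinomial counts; I would import these from existing bootstrap-consistency results rather than re-derive them, and flag homoscedasticity as precisely the hypothesis that collapses the general sandwich $\Sigma_X^{-1}\mathbb{E}[xx^\top e^2]\Sigma_X^{-1}$ into the clean form $\sigma^2\Sigma_X^{-1}$.
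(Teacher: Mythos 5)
Your proof is correct, and it takes a genuinely different route from the paper's. The paper substitutes the linear model into the resampled pairs, writes $\hat\theta_S(I)=\theta+(X_I^\top X_I)^{-1}X_I^\top\eta_I$, treats the resampled noise as satisfying $\operatorname{Var}(\eta_I\mid X_I)=\sigma^2 I_m$, and then replaces $(X_I^\top X_I)^{-1}$ by $\tfrac1m\Sigma_X^{-1}$ via the law of large numbers. That argument is shorter but computes the variance over the \emph{original} noise $\eta$ (centering at $\theta$), and it quietly ignores the fact that duplicated indices in $I$ carry identical, not independent, copies of $\eta_i$. You instead condition on the realized data, encode the replicate by multinomial counts, use the exact identity $\hat\theta^{(b)}-\hat\theta=(\widehat\Sigma^{(b)})^{-1}\tfrac1m\sum_i N_i x_i\hat e_i$, and exploit the cancellation $\sum_i x_i\hat e_i=0$ to get $\operatorname{Var}_b(S^{(b)})=\tfrac1{mn}\sum_i\hat e_i^2 x_ix_i^\top$ exactly — I checked the multinomial covariance algebra and it is right. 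This is the classical Freedman pairs-bootstrap analysis: it measures the variance that is actually being claimed (variability across bootstrap replicates given the data, centered at $\hat\theta$), handles the negative correlation of the counts explicitly, and isolates homoscedasticity as the hypothesis that collapses the sandwich $\Sigma_X^{-1}\big(\tfrac1n\sum_i\hat e_i^2x_ix_i^\top\big)\Sigma_X^{-1}$ to $\sigma^2\Sigma_X^{-1}$. What the paper's route buys is brevity and a direct link to the familiar OLS variance formula; what yours buys is fidelity to the bootstrap semantics and a path that would extend to heteroscedastic noise. The remaining technical caveats you flag (invertibility of $\widehat\Sigma^{(b)}$ with high probability, the $o(1/m)$ contribution of the quadratic remainder) are real but are at the same level of informality as the paper's own ``$\approx$'' steps, so deferring them to standard bootstrap-consistency results is appropriate here.
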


\begin{theorem}[Ground-truth bootstrap: expected test MSE]
\label{thm:groundtruth_bootstrap_mse}
Under standard assumptions and feature dimension $d$, the expected test MSE of a ground-truth bootstrap student model trained on $m$ samples behaves as $R_m \approx \sigma^2 + \frac{\sigma^2 d}{m}$, so $R_m\to\sigma^2$ as $m\to\infty$.
\qed
\end{theorem}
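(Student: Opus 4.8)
The plan is to reduce the claim to the classical bias--variance decomposition of ordinary least squares and then substitute the Gram-matrix scaling already established in Theorem~\ref{thm:groundtruth_bootstrap_variance}. Write the bootstrap student estimator as $\hat\theta_S^{(b)} = (X_b^\top X_b)^{-1}X_b^\top y_b$, where $(X_b,y_b)$ is a size-$m$ resample (with replacement) of the original ground-truth pairs $(x_i,y_i)$, and $y = X\theta + \eta$ with $\eta\sim\mathcal N(0,\sigma^2 I_n)$. Treating, as in the standard bootstrap analysis, the empirical distribution $\hat P_n$ as the data law (so that, as $n\to\infty$, resampling $m$ points matches drawing $m$ i.i.d.\ samples from the true law and no generality is lost at first order), the estimator is conditionally unbiased for $\theta$ given the resampled design, since the resampled noise vector has mean zero given the selected indices.

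First I would write the test risk at a fresh point $(x^{\mathrm{test}},y^{\mathrm{test}})$ with $y^{\mathrm{test}} = x^{\mathrm{test}\top}\theta + \eta^{\mathrm{test}}$, $\eta^{\mathrm{test}}\sim\mathcal N(0,\sigma^2)$ independent of the training data, and expand:
\[
R_m = \mathbb E\big[(x^{\mathrm{test}\top}\hat\theta_S^{(b)} - y^{\mathrm{test}})^2\big]
     = \sigma^2 + \mathbb E\big[(x^{\mathrm{test}\top}(\hat\theta_S^{(b)} - \theta))^2\big],
\]
where the cross term drops out by independence of $\eta^{\mathrm{test}}$ and the bias term vanishes by well-specification. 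Conditioning on the design and then averaging over the test covariate, the remaining term equals $\sigma^2\,\mathbb E\big[\operatorname{tr}(\Sigma_X (X_b^\top X_b)^{-1})\big]$, where $\Sigma_X = \mathbb E[x^{\mathrm{test}}x^{\mathrm{test}\top}]$ is the second-moment matrix.

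Next I would invoke the $1/m$ scaling of Theorem~\ref{thm:groundtruth_bootstrap_variance}: by the law of large numbers over the bootstrap weights, $\tfrac1m X_b^\top X_b \to \Sigma_X$, hence $(X_b^\top X_b)^{-1} \approx \tfrac1m \Sigma_X^{-1}$ and $\operatorname{tr}(\Sigma_X(X_b^\top X_b)^{-1}) \approx \tfrac1m \operatorname{tr}(I_d) = d/m$. Under a Gaussian design one even has the exact identity $\mathbb E[(X_b^\top X_b)^{-1}] = \tfrac{1}{m-d-1}\Sigma_X^{-1}$, giving $R_m = \sigma^2 + \tfrac{\sigma^2 d}{m-d-1} = \sigma^2 + \sigma^2 d/m + o(1/m)$; in general one uses concentration of $\tfrac1m X_b^\top X_b$ about $\Sigma_X$ together with a lower bound on its smallest eigenvalue to push the limit through the trace. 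Substituting back yields $R_m \approx \sigma^2 + \sigma^2 d/m$, and $R_m\to\sigma^2$ as $m\to\infty$.

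The main obstacle is the rigorous control of $\mathbb E[(X_b^\top X_b)^{-1}]$: the bootstrap Gram matrix is not Wishart, since its rows come from a discrete empirical distribution and ties occur, so the clean Gaussian formula is only heuristic. I would split the expectation on the high-probability event that $\lambda_{\min}(\tfrac1m X_b^\top X_b)$ is bounded below --- controlled by a matrix Bernstein/Chernoff bound applied to the sum of the rescaled resampled rank-one terms --- where the trace is $\tfrac dm(1+o(1))$, and bound the complement crudely using $m\ge d+2$ and moment bounds so that its contribution is $o(1/m)$. Handling ties in the resampled noise is subsumed by the standard reduction to the i.i.d.\ regime (collisions among $m$ draws from $n$ points are negligible when $m\ll\sqrt n$). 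The rest is routine least-squares bookkeeping.
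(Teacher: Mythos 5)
Your proposal is correct and follows essentially the same route as the paper's proof: the bias--variance decomposition $R_m = \sigma^2 + \mathbb{E}[(x^\top(\hat\theta_S - \theta))^2]$ with the cross term vanishing by independence, conditional unbiasedness and variance $\sigma^2(X_I^\top X_I)^{-1}$ of the bootstrap OLS estimator, the law-of-large-numbers approximation $(X_I^\top X_I)^{-1}\approx \frac{1}{m}\Sigma_X^{-1}$, and the trace identity $\operatorname{tr}(\Sigma_X^{-1}\Sigma_X)=d$. Your additional remarks on making the $\approx$ rigorous (concentration of the bootstrap Gram matrix, handling of ties) go beyond what the paper does, which simply works at the same heuristic level as the theorem's stated approximation.
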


\subsubsection{Empirical results in linear and neural student models}

Figure~\ref{fig:lin_bootstrap_mse} confirms the linear-theory predictions on Boston Housing: teacher-model bootstrap produces zero variability, whereas ground-truth bootstrap yields higher variance and mean MSE at small $m$, both decreasing roughly as $1/m$.  
For neural student models (Figure~\ref{fig:nn_bootstrap_mse}), both variants induce non-trivial variability: increasing $m$ reduces both mean MSE and its spread, and ground-truth bootstrap yields larger variance than teacher-model bootstrap at small $m$.  
These findings demonstrate that bootstrap resampling provides a useful data-driven probe of student-model sensitivity in nonlinear settings.

\begin{figure}[t]
  \centering
  \begin{minipage}[t]{0.48\linewidth}
    \centering
    \includegraphics[width=\linewidth]{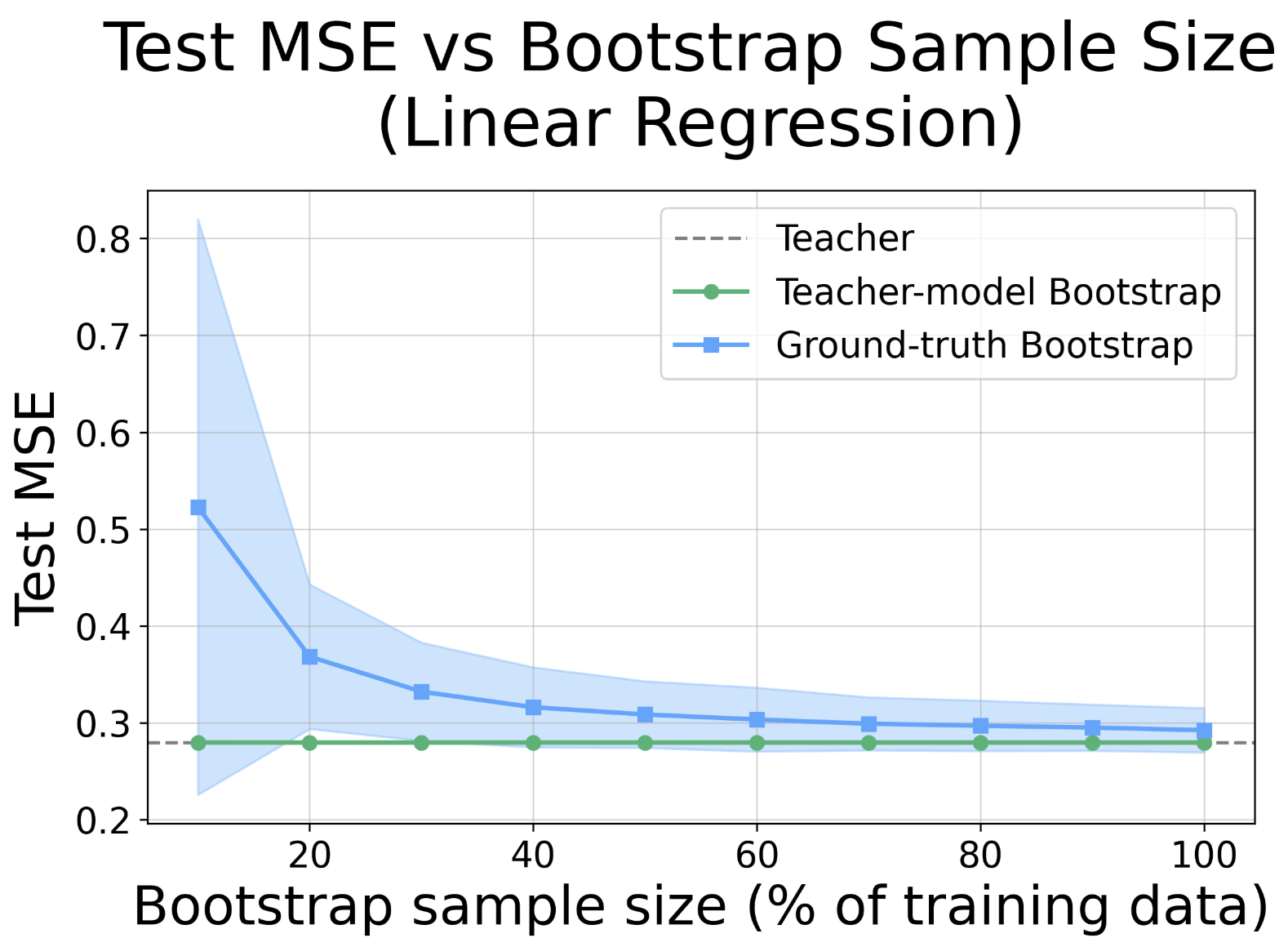}
    \caption{Data-driven student initialization uncertainty in linear regression.
    }
    \label{fig:lin_bootstrap_mse}
  \end{minipage}%
  \hfill
  \begin{minipage}[t]{0.48\linewidth}
    \centering
    \includegraphics[width=\linewidth]{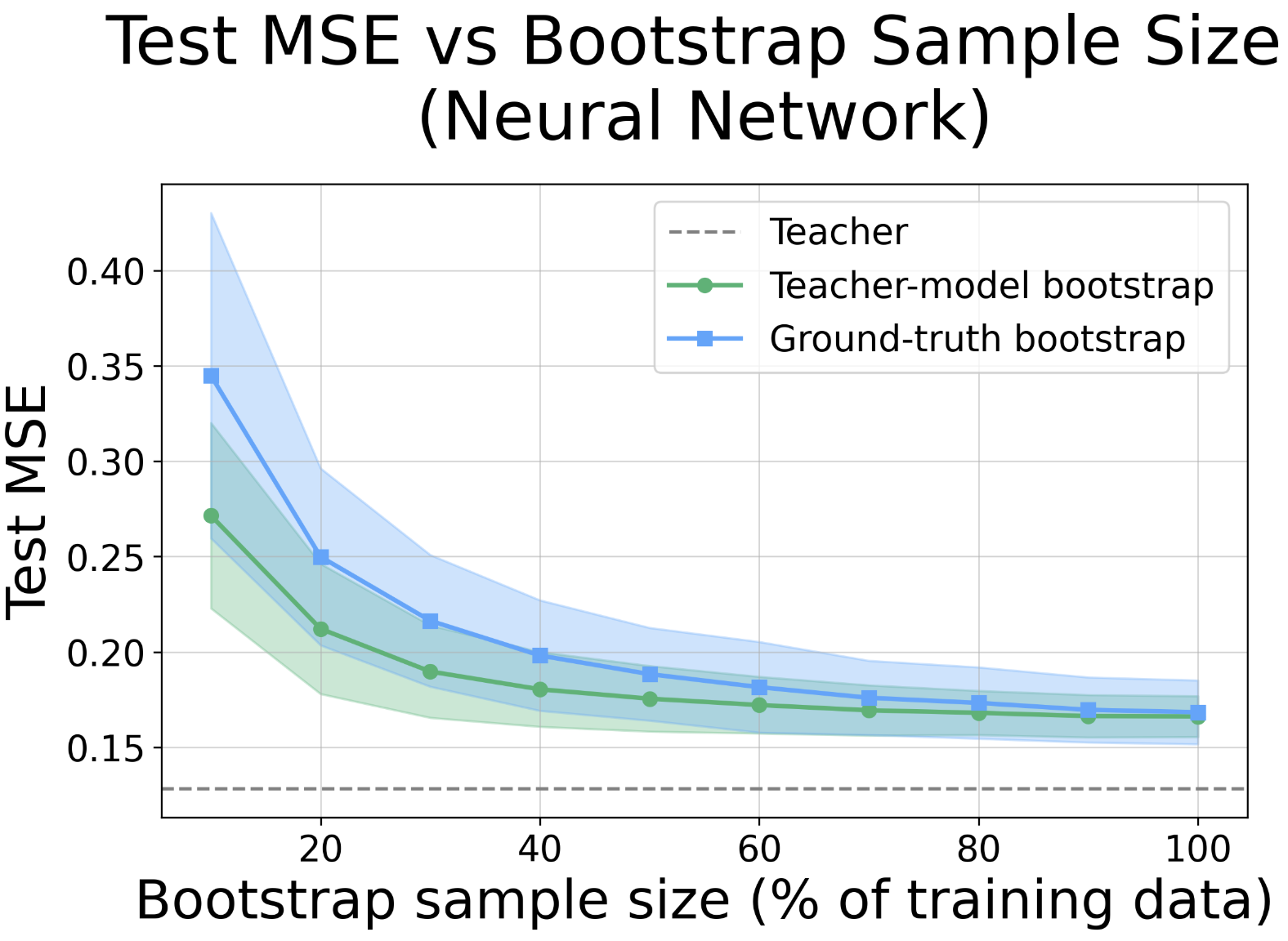}
    \caption{Data-driven student initialization uncertainty in neural networks.
    }
    \label{fig:nn_bootstrap_mse}
  \end{minipage}
\end{figure}

\subsubsection{Summary}
Teacher-model bootstrap induces no student-model variability and becomes uninformative when the teacher model is exactly realizable by the student-model class. Ground-truth bootstrap, by contrast, reveals finite-sample variability that decays as $1/m$ and thus provides a practical, low-cost proxy for initialization sensitivity in nonlinear or misspecified regimes.

\subsection{Synthesis and Practical Recommendations}

Combining model-driven and data-driven perspectives yields practical guidance. First, for \textbf{small, well-behaved student models} (e.g., compact feed-forward networks) trained under strong deterministic supervision, initialization perturbations have limited effect; a single distillation run often suffices. Second, for \textbf{large, high-capacity student models} (e.g., LLMs), initialization (or small perturbations around it) can substantially alter fine-tuning outcomes. Practitioners should average over multiple initializations or employ variance-aware distillation procedures. Last, when computational resources are constrained, \textbf{ground-truth bootstrap} with a fixed initialization offers a lightweight proxy for estimating student-model sensitivity. Teacher-model bootstrap is informative only when the teacher model is not exactly realizable by the student-model class.

Formal proofs of the linear-theory results appear in Appendix~\ref{app:proofs}. Experimental details (architectures, optimization hyperparameters, dataset splits) are provided in Appendix~\ref{app:bootstrap_setup}.

\section{Student Model Output Uncertainty}
\label{sec:uncertainty_output}

We now examine the third source of uncertainty in knowledge distillation: \emph{student model output uncertainty}, which reflects the predictive variability inherent in a single student model. As noted earlier, linear regression is fully deterministic once trained and therefore does not exhibit this form of uncertainty. To study output uncertainty in a controlled way, we use classification tasks where predictive randomness naturally arises from softmax outputs.

\subsection{Logistic Regression and Neural Networks}
\label{sec:simple_student_output_uncertainty}


We begin with a simple distillation experiment on the Digits dataset~\cite{optical_recognition_of_handwritten_digits_80} (details in Appendix~\ref{app:datasets_all}).
The teacher model is trained using ground-truth labels. For each input $x$, the teacher produces a predictive distribution $p_T(y \mid x)$; the student model is trained on hard pseudo-labels $\hat{y}^{(T)}(x)=\arg\max_y p_T(y \mid x)$. 

To quantify model output uncertainty, we compute predictive entropy $H(x)=-\sum_{y} p(y \mid x)\log p(y\mid x)$ for each test example. For both teacher and student models, we report the mean and standard deviation of entropy across the full test set, as well as separately for correctly and incorrectly classified samples. We perform this experiment with both logistic regression and a small neural network. Additional implementation details can be found in Appendix~\ref{app:student_output_uncertainty_extra_datasets}.

Figures~\ref{fig:log_reg_student_output_uncertainty} and~\ref{fig:nn_student_output_uncertainty} summarize the results. In both model families, the student model closely matches the teacher model’s accuracy and predictive entropy. Entropy differences between teacher and student models are small, and the mean \(\pm\) standard deviation intervals overlap in all groups. This indicates that, for these supervised classification tasks, the student model largely preserves the teacher model’s predictive uncertainty.

\begin{figure}[t]
\centering
\begin{subfigure}[t]{0.48\linewidth}
    \centering
    \includegraphics[width=\linewidth]{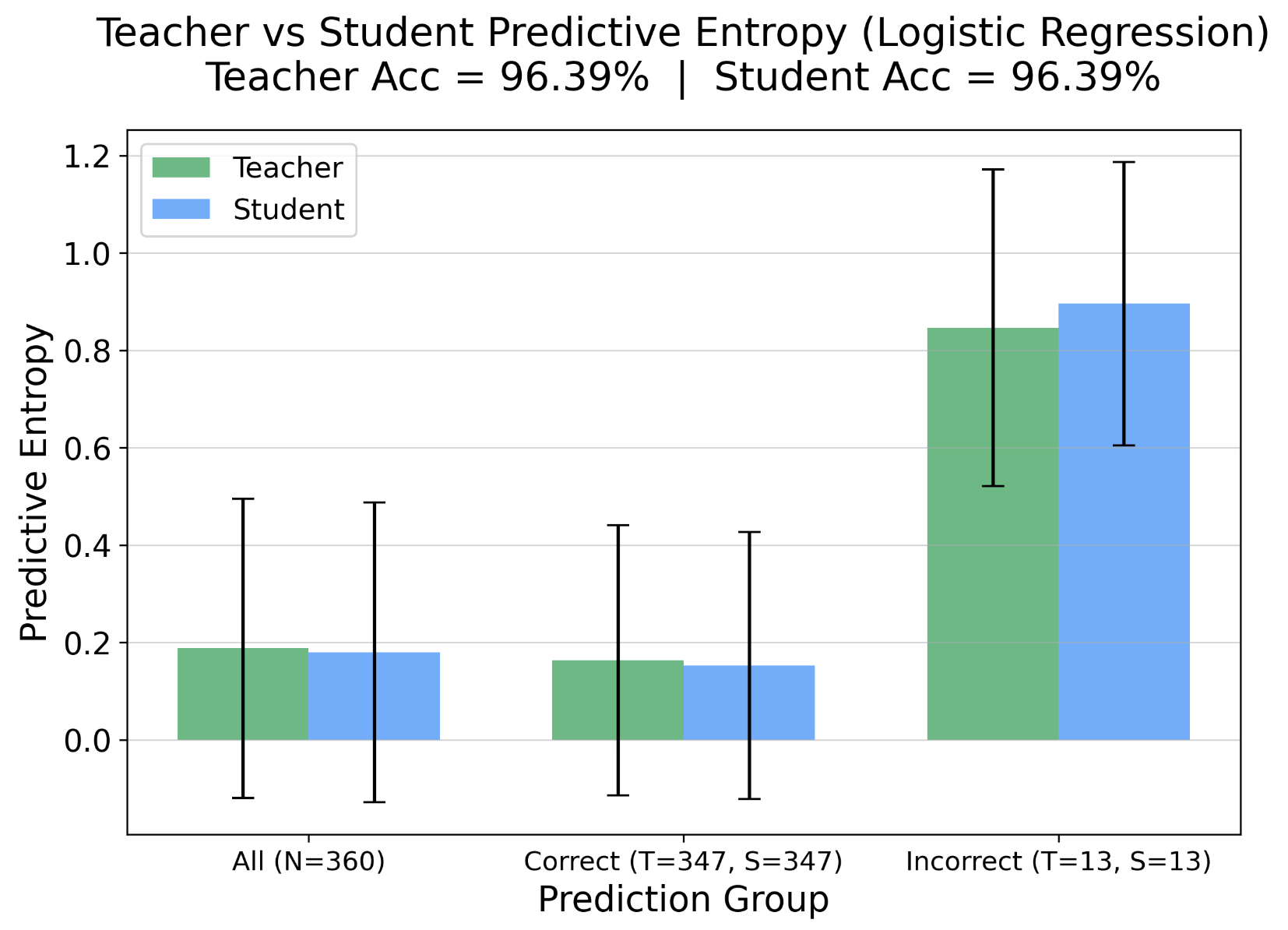}
    \caption{Logistic regression.}
    \label{fig:log_reg_student_output_uncertainty}
\end{subfigure}
\hfill
\begin{subfigure}[t]{0.48\linewidth}
    \centering
    \includegraphics[width=\linewidth]{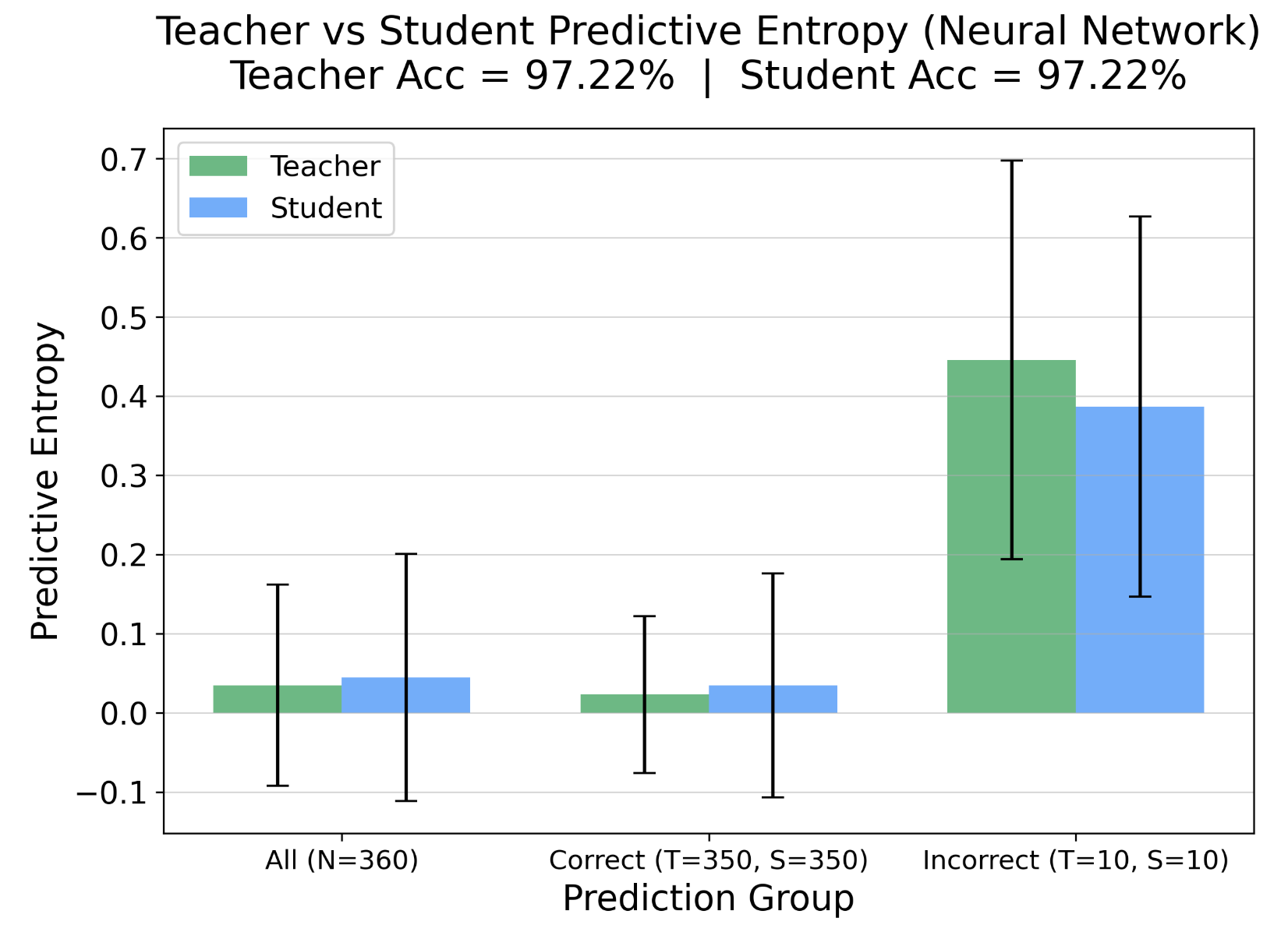}
    \caption{Neural networks.}
    \label{fig:nn_student_output_uncertainty}
\end{subfigure}
\caption{
Student model output uncertainty in classification.}
\end{figure}

We observe the same pattern across several additional datasets--including Wine, Breast Cancer, MNIST, and Covertype—where teacher and student model entropies remain closely aligned. Detailed numerical results are provided in Appendix~\ref{app:student_output_uncertainty_extra_datasets}.

\subsection{LLMs: Consistent Distillation Yields Consistent Students}

LLMs remain stochastic even when initialization and supervision are fixed since generative decoding samples from a predictive distribution. Ideally, if the teacher model outputs and the student model initialization are identical across runs, independently distilled student models should produce similar predictive distributions.

To isolate output uncertainty in this setting, we fix all sources of randomness except the inherent stochasticity of generation. We distill 20 student models from a single teacher model, all using the same teacher model outputs and identical parameter initialization. At inference time, we sample 20 responses from the teacher model and each student model. For each model $\mathcal{M}$, we compute the mean embedding vector $m_{\mathcal{M}}$ and variance vector $v_{\mathcal{M}}$ across its responses. To assess alignment, we form $\mathcal{A}=\{\text{CosSim}(m_T,m_1),\ldots,\text{CosSim}(m_T,m_{20})\}$, where $\mathcal{A}$ captures how closely each student model resembles the teacher model's output.

\begin{figure}[t]
\centering
\begin{subfigure}[t]{0.48\linewidth}
    \centering
    \includegraphics[width=0.88\linewidth]{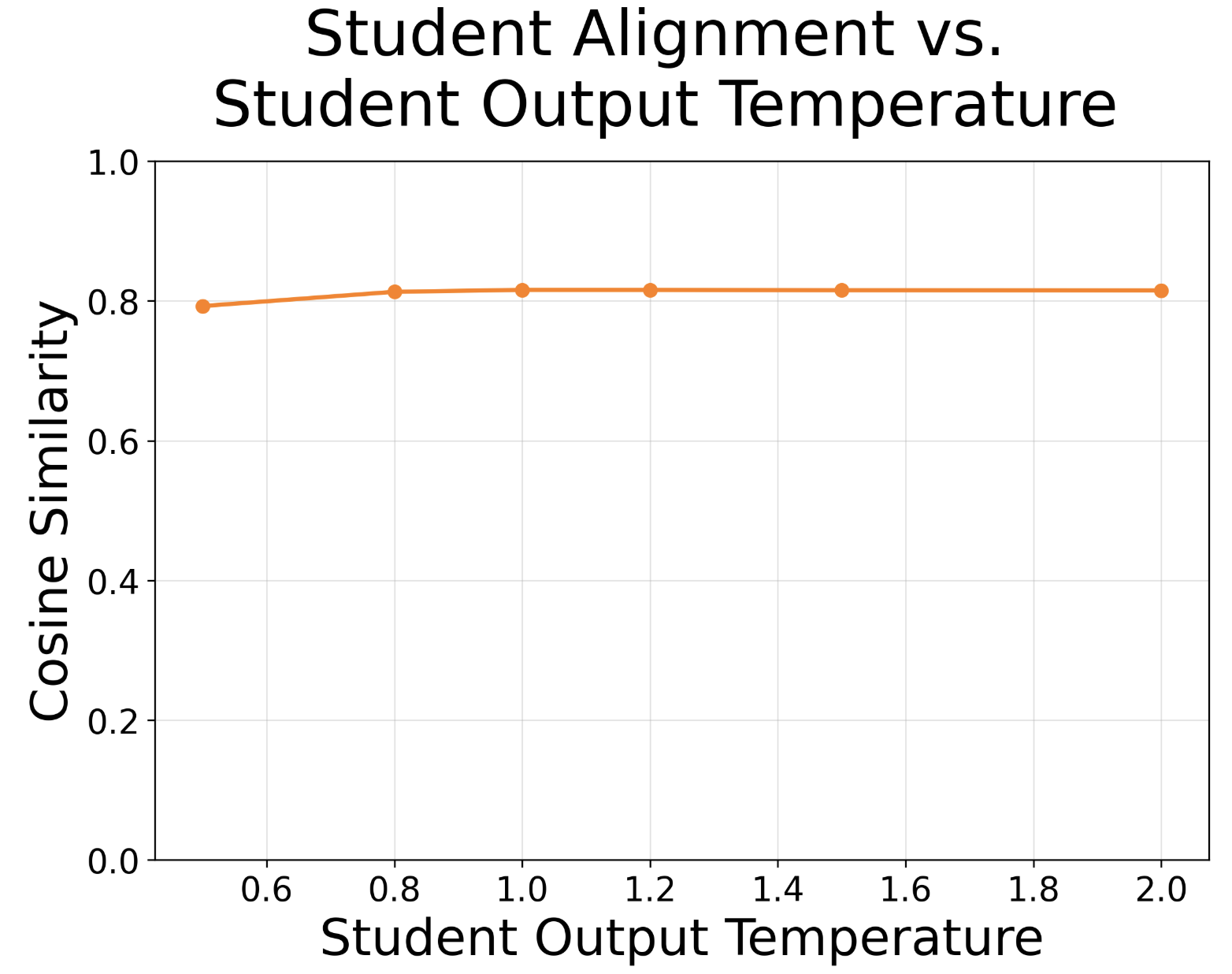}
    \caption{Student alignment (cosine similarity) vs.\ student noise.}
    \label{fig:student_output_uncertainty_a}
\end{subfigure}
\hfill
\begin{subfigure}[t]{0.48\linewidth}
    \centering
    \includegraphics[width=0.95\linewidth]{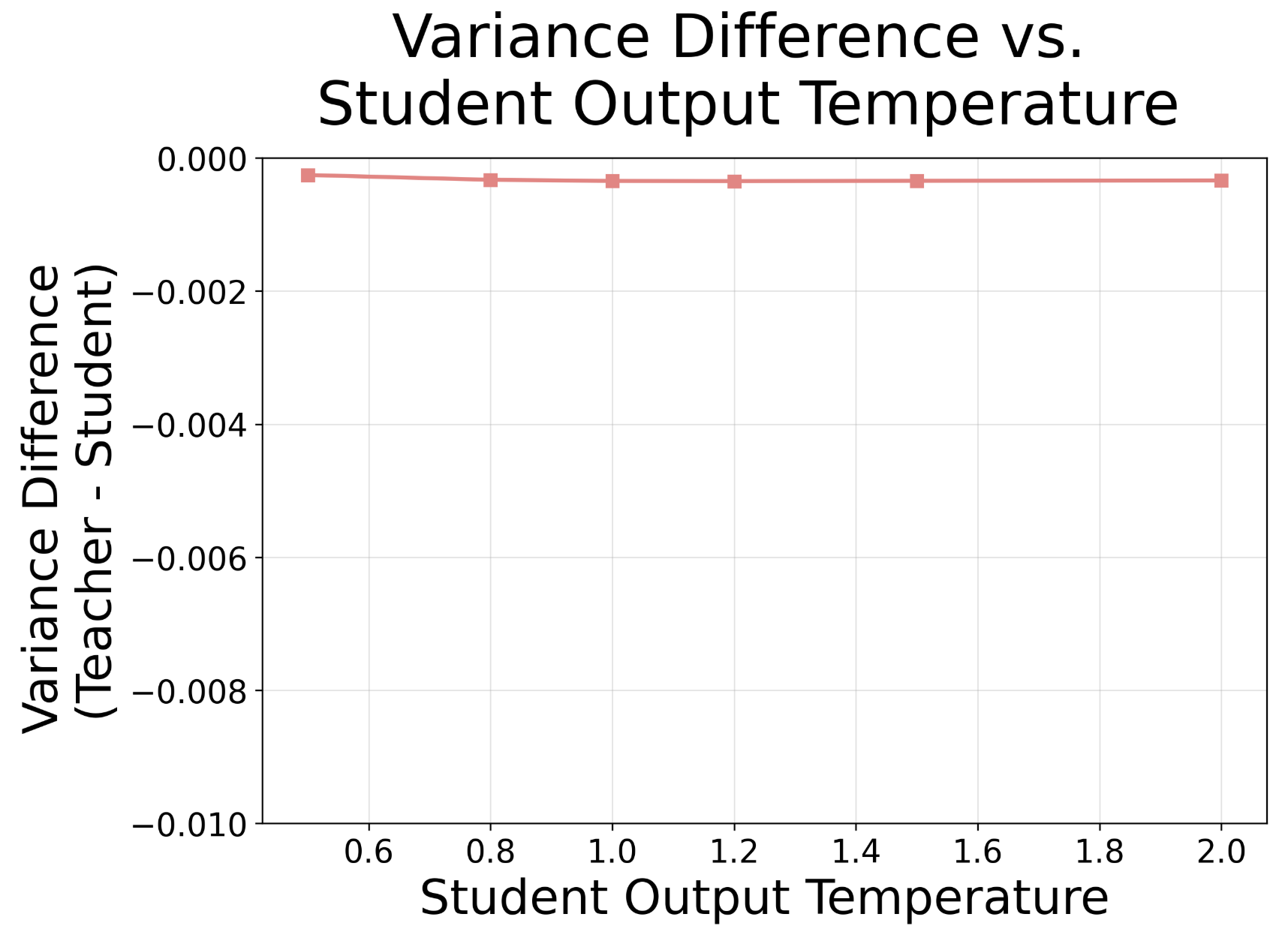}
    \caption{Predictive variance difference vs.\ student noise.}
    \label{fig:student_output_uncertainty_b}
\end{subfigure}
\caption{Student model's sensitivity to student output uncertainty in LLMs.
}
\label{fig:student_output_uncertainty}
\end{figure}

Figure~\ref{fig:student_output_uncertainty_a} shows that distilled student models exhibit high alignment with the teacher and low inter-student variance (shaded region too small to be visible). When supervision is fixed, distillation consistently produces student models with similar predictive distributions, indicating that the remaining output uncertainty is primarily due to generative sampling rather than training instability.

\subsubsection{Comparison of intra-model variance in student and teacher models}
\label{sec:llm_student_uncertainty_intra_model}

Although the mean predictive embeddings align closely, we observe small differences in predictive variance. For each student model $S$, we compute $\Delta v(S) = \frac{1}{d}\sum_{j=1}^d (v_{T,j}-v_{S,j})$, where $d$ is the embedding dimensionality. In Figure~\ref{fig:student_output_uncertainty_b}, across teacher temperatures, the values of $\Delta v(S)$ are consistently negative, indicating that student models display slightly higher intra-model variance than teacher model. The differences, however, are numerically small.

Overall, the discrepancy in variance is negligible, and teacher and student models can be regarded as having comparable output uncertainty in this setting. When the supervision signal is consistent, student models inherit a stable predictive distribution and do not introduce significant additional uncertainty.

\section{How to Correct Student Model Uncertainty?}
\label{sec:solution}

The analyses above show that a substantial portion of the uncertainty mismatch in knowledge distillation arises from two factors: noise in teacher model outputs and distillation signals that are too weak to correct small differences in student model initialization. In this section, we present two simple and effective strategies to reduce inter-student uncertainty and preserve intra-student uncertainty during distillation. The first, \emph{Averaging}, constructs lower-variance supervision by using the sample mean of multiple teacher responses. The second, \emph{Variance-Weighting}, forms an optimal combination of teacher and student model estimates based on their respective variances. We first analyze these methods in linear regression, where theory is exact, and then demonstrate that the same principles extend to nonlinear models.

Both strategies aim to reduce the effective noise in the targets used for distillation. Averaging reduces teacher model output variance by a factor of \(1/k\) when \(k\) independent teacher model responses are used per input. Variance-weighting further refines this approach by weighting teacher and student model estimates according to their inverse variances, yielding a minimum-variance combination.

\subsection{Theoretical Analysis on Simple Models}
\label{sec:theory_simple}

Because the intrinsic teacher model variance \(\sigma_T^2\) cannot be altered directly, our goal is to construct lower-variance distillation targets that yield more stable student models. In the linear regression setting, both averaging and variance-weighting lead to student model predictions with provably reduced inter-student variance. Experiments with linear regression and neural networks confirm that these reductions hold broadly in practice.

\subsubsection{Averaging teacher model responses}

Suppose the teacher model produces independent noisy responses $y_{i,j}^{(T)} = f_T(x_i) + \varepsilon_{i,j}$, where $\varepsilon_{i,j} \overset{\text{i.i.d.}}{\sim} \mathcal{N}(0,\sigma_T^2)$. Averaging $k$ responses gives $\mu_{T,i} = \frac{1}{k}\sum_{j=1}^{k} y_{i,j}^{(T)}$, which satisfies $\operatorname{Var}(\mu_{T,i}) = \sigma_T^2/k$. Training the student model on these averaged targets reduces both the noise in the distillation dataset and the inter-student predictive variance.

\begin{theorem}
\label{thm:averaging_var}
In linear regression distillation, dataset noise \(\operatorname{Var}(\mu_T)\), variance of the student model parameter estimator \(\operatorname{Var}(\widehat{\theta}_S)\), and the inter-student predictive variance \(\operatorname{Var}(\widehat{f}_S(x))\) all decay at order \(1/k\) as the number of teacher model responses per input increases.
\qed
\end{theorem}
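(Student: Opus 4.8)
The plan is to chain three elementary variance computations, each exploiting the linearity of the OLS map together with the independence of the injected teacher noise, and then invoke Theorem~\ref{thm:inter_student_variance} to pass from a per-point predictive variance to the inter-student variance $\mathcal{V}_{\mathrm{inter}}$. All variances below are understood conditionally on the fixed design matrix $X$ (and test point $x$).

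First I would handle the dataset noise. Writing $\mu_{T,i} = f_T(x_i) + \bar\varepsilon_i$ with $\bar\varepsilon_i = \frac{1}{k}\sum_{j=1}^{k}\varepsilon_{i,j}$, independence of the $\varepsilon_{i,j}\overset{\text{i.i.d.}}{\sim}\mathcal{N}(0,\sigma_T^2)$ across $j$ gives $\operatorname{Var}(\mu_{T,i}) = \operatorname{Var}(\bar\varepsilon_i) = \sigma_T^2/k$, and independence across $i$ additionally yields $\operatorname{Cov}(\bar\varepsilon) = (\sigma_T^2/k)\,I_n$ for the stacked noise vector $\bar\varepsilon = (\bar\varepsilon_1,\dots,\bar\varepsilon_n)^\top$; this scalar covariance is precisely what keeps the subsequent propagation clean. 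Second, I would propagate through the student's OLS estimator: since $\mu_T = X\theta_T + \bar\varepsilon$, the closed form $\widehat\theta_S = (X^\top X)^{-1}X^\top\mu_T$ is an affine function of $\bar\varepsilon$, so $\operatorname{Var}(\widehat\theta_S) = (X^\top X)^{-1}X^\top\operatorname{Cov}(\bar\varepsilon)X(X^\top X)^{-1} = \tfrac{\sigma_T^2}{k}(X^\top X)^{-1}$, mirroring the $k=1$ identity $\operatorname{Var}(\widehat\theta_S)=\sigma_T^2(X^\top X)^{-1}$ recorded in Section~\ref{sec:prelim_models} but with $\sigma_T^2$ replaced by $\sigma_T^2/k$. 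Third, for a fixed test point $x$ the prediction $\widehat f_S(x)=x^\top\widehat\theta_S$ is linear in $\widehat\theta_S$, hence $\operatorname{Var}(\widehat f_S(x)) = x^\top\operatorname{Var}(\widehat\theta_S)x = \tfrac{\sigma_T^2}{k}\,x^\top(X^\top X)^{-1}x$. Averaging over a test set and taking expectations, the computation underlying Theorem~\ref{thm:inter_student_variance} shows that $\mathbb{E}[\mathcal{V}_{\mathrm{inter}}]$ equals $\tfrac{\sigma_T^2}{k}$ times a design-dependent trace, so all three quantities carry an explicit factor $1/k$.

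I do not anticipate a genuine obstacle here: the statement is a direct consequence of independence plus linearity, and the only care needed is bookkeeping. One must interpret the inter-student variance as variance over independent distillation runs, i.e.\ over independent draws of the teacher noise $\{\varepsilon_{i,j}\}$, which is exactly the sampling variance of $\widehat\theta_S$ computed above; and one must verify that the $k$ teacher responses are drawn independently both across inputs $i$ and across repetitions $j$, since otherwise the stacked covariance would not reduce to $(\sigma_T^2/k)\,I_n$ and the clean $1/k$ scaling could break down. Given the modeling assumptions stated in Section~\ref{sec:prelim_models}, these conditions hold, and the three claimed decays follow immediately.
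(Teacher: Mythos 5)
Your proposal is correct and follows essentially the same route as the paper's proof: compute $\operatorname{Var}(\mu_{T,i})=\sigma_T^2/k$ from the i.i.d.\ teacher draws, propagate through the OLS sandwich formula to get $\operatorname{Var}(\widehat\theta_S)=\tfrac{\sigma_T^2}{k}(X^\top X)^{-1}$, and read off $\operatorname{Var}(\widehat f_S(x))=\tfrac{\sigma_T^2}{k}\,x^\top(X^\top X)^{-1}x$. The only cosmetic difference is that you explicitly average over the test set via Theorem~\ref{thm:inter_student_variance}, whereas the paper stops at the per-point predictive variance; both carry the same $1/k$ factor.
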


\begin{theorem}
\label{thm:averaging_umvue}
The sample mean \(\mu_{T,i}\) is the uniformly minimum-variance unbiased estimator of the teacher model output \(f_T(x_i)\).
\qed
\end{theorem}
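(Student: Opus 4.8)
The plan is to recognize that, once the input $x_i$ is fixed, estimating $f_T(x_i)$ from the $k$ noisy teacher responses is exactly the classical problem of estimating the mean of a normal sample, and then invoke the Lehmann--Scheffé theorem. Write $\mu := f_T(x_i)$. The modeling assumption $y_{i,j}^{(T)} = f_T(x_i) + \varepsilon_{i,j}$ with $\varepsilon_{i,j}\overset{\text{i.i.d.}}{\sim}\mathcal{N}(0,\sigma_T^2)$ means $y_{i,1}^{(T)},\dots,y_{i,k}^{(T)}$ are i.i.d.\ $\mathcal{N}(\mu,\sigma_T^2)$, and $\mu_{T,i}=\frac1k\sum_{j=1}^k y_{i,j}^{(T)}$ is the sample mean. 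So the claim reduces to the textbook fact that the sample mean is the UMVUE of a Gaussian location parameter.

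First I would verify sufficiency and completeness. The joint density of $(y_{i,1}^{(T)},\dots,y_{i,k}^{(T)})$ forms a one-parameter exponential family in $\mu$ with natural statistic $T=\sum_{j=1}^k y_{i,j}^{(T)}$ (equivalently $\mu_{T,i}$); the factorization theorem gives sufficiency of $T$, and since the natural parameter $\mu/\sigma_T^2$ sweeps out an open interval as $\mu$ ranges over $\mathbb{R}$, $T$ is complete. Next I would check unbiasedness: by linearity of expectation and $\mathbb{E}[y_{i,j}^{(T)}] = f_T(x_i) = \mu$, we get $\mathbb{E}[\mu_{T,i}] = \mu$ for every $\mu$. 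Since $\mu_{T,i}$ is an unbiased estimator that is a function of the complete sufficient statistic $T$, Lehmann--Scheffé implies it is the \emph{unique} UMVUE of $\mu = f_T(x_i)$; its variance $\sigma_T^2/k$ (from Theorem~\ref{thm:averaging_var}) is therefore no larger than that of any other unbiased estimator, for every $\mu$ --- the ``uniformly'' is automatic because $\sigma_T^2/k$ does not depend on $\mu$. A fully self-contained alternative is the Cramér--Rao route: the score is $\partial_\mu \log p = \frac{k}{\sigma_T^2}(\mu_{T,i}-\mu)$, the Fisher information is $k/\sigma_T^2$, the information bound for unbiased estimators is $\sigma_T^2/k$, and $\mu_{T,i}$ attains it.

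I do not expect a genuine obstacle --- this is a classical result --- so the only point requiring care is to state precisely what is treated as known: if $\sigma_T^2$ is regarded as an unknown nuisance parameter, one replaces $T$ with the complete sufficient statistic $\bigl(\sum_j y_{i,j}^{(T)},\ \sum_j (y_{i,j}^{(T)})^2\bigr)$ of the full normal family, and the argument is unchanged since $\mu_{T,i}$ remains an unbiased function of it. Either way the Gaussian exponential-family structure makes completeness immediate and Lehmann--Scheffé delivers the conclusion.
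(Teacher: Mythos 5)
Your proposal is correct and follows essentially the same route as the paper's proof: both reduce the claim to the i.i.d.\ Gaussian setting, note that \(\mu_{T,i}\) is an unbiased function of the complete sufficient statistic \(\sum_j y_{i,j}^{(T)}\), and invoke the Lehmann--Scheff\'e theorem. Your additional details (exponential-family verification of completeness, the Cram\'er--Rao alternative, and the nuisance-parameter remark) are sound but not needed beyond what the paper records.
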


\subsubsection{Variance-Weighting}


When the student model itself is stochastic—as is the case for LLMs—both the teacher and the student produce random outputs with associated variances. The student model can use its own predictive variance to decide how much to trust the teacher model.

Let $\widehat{\sigma_{T,i}^2}$ be an empirical estimate of the teacher model variance and let $\sigma_{S,i}^2$ be the student model variance for input $x_i$. A variance-aware correction constructs the combined target $\widehat{y}(x_i) = w_{T,i}\,\mu_{T,i} + w_{S,i}\,\mu_{S,i}$, with $w_{T,i} + w_{S,i} = 1$.

\begin{theorem}
\label{thm:var_weight_optimal}
The weights that minimize the variance of the target $\widehat{y}(x_i)$ are
$w_{T,i} = \frac{1/\widehat{\sigma_{T,i}^2}}{1/\widehat{\sigma_{T,i}^2} + 1/\sigma_{S,i}^2}$ and
$w_{S,i} = \frac{1/\sigma_{S,i}^2}{1/\widehat{\sigma_{T,i}^2} + 1/\sigma_{S,i}^2}$.
\qed\end{theorem}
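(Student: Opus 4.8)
The plan is to treat this as an elementary constrained optimization problem. First I would make explicit the mild modeling assumption that the teacher estimate $\mu_{T,i}$ and the student estimate $\mu_{S,i}$ are uncorrelated — natural in this setting, since they arise from distinct stochastic processes (teacher sampling versus student generative sampling) modeled independently earlier in the paper. Under this assumption the variance of any affine combination splits additively, so that $\operatorname{Var}(\widehat{y}(x_i)) = w_{T,i}^2\,\widehat{\sigma_{T,i}^2} + w_{S,i}^2\,\sigma_{S,i}^2$, with no cross term.

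Next I would eliminate the constraint $w_{T,i} + w_{S,i} = 1$ by substituting $w_{S,i} = 1 - w_{T,i}$ and studying the single-variable objective $g(w) = w^2\,\widehat{\sigma_{T,i}^2} + (1-w)^2\,\sigma_{S,i}^2$. Setting $g'(w) = 2w\,\widehat{\sigma_{T,i}^2} - 2(1-w)\,\sigma_{S,i}^2 = 0$ gives $w\,\big(\widehat{\sigma_{T,i}^2} + \sigma_{S,i}^2\big) = \sigma_{S,i}^2$, hence $w_{T,i} = \sigma_{S,i}^2 / \big(\widehat{\sigma_{T,i}^2} + \sigma_{S,i}^2\big)$; dividing numerator and denominator by $\widehat{\sigma_{T,i}^2}\,\sigma_{S,i}^2$ rewrites this as the inverse-variance weight $w_{T,i} = \big(1/\widehat{\sigma_{T,i}^2}\big)\big/\big(1/\widehat{\sigma_{T,i}^2} + 1/\sigma_{S,i}^2\big)$, and the complementary $w_{S,i}$ follows from $w_{S,i} = 1 - w_{T,i}$. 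Since $g''(w) = 2\big(\widehat{\sigma_{T,i}^2} + \sigma_{S,i}^2\big) > 0$, the stationary point is the unique global minimizer. One could equivalently run a Lagrange-multiplier argument on the constraint $w_{T,i}+w_{S,i}=1$ or appeal to Cauchy–Schwarz, but the substitution route is the most direct.

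The only point requiring care — not really an obstacle — is the uncorrelatedness hypothesis: if $\operatorname{Cov}(\mu_{T,i},\mu_{S,i}) \neq 0$, a term $2w_{T,i}w_{S,i}\operatorname{Cov}(\mu_{T,i},\mu_{S,i})$ enters the variance and the optimal weights pick up a correction (the classical generalized-least-squares form). I would therefore state the independence of teacher and student stochasticity explicitly among the theorem's hypotheses, which is consistent with how the two noise sources are treated separately throughout the paper, so that the clean inverse-variance formula holds exactly as stated.
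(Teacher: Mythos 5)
Your proposal is correct and follows essentially the same route as the paper's proof: substitute $w_{S,i}=1-w_{T,i}$, minimize the resulting single-variable quadratic by setting the derivative to zero, and confirm global optimality via the positive second derivative (the paper carries an overall factor of $1/k$ from averaging $k$ responses, which does not affect the minimizer). Your explicit flagging of the uncorrelatedness assumption between $\mu_{T,i}$ and $\mu_{S,i}$ is a reasonable point of care that the paper leaves implicit, but it does not change the argument.
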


\begin{theorem}
\label{thm:var_weight_properties}
The minimum achievable variance satisfies
$\operatorname{Var}[\widehat{y}(x_i)]_{\min}
= \frac{1}{k}\cdot \frac{\widehat{\sigma_{T,i}^2}\sigma_{S,i}^2}{\widehat{\sigma_{T,i}^2} + \sigma_{S,i}^2}$,
and enjoys the following properties:
\begin{enumerate}
    \item it is always smaller than the variance from using either model alone,
    \item it downweights a noisy teacher model when $\widehat{\sigma_{T,i}^2}$ is large,
    \item it downweights a noisy student model when $\sigma_{S,i}^2$ is large, and
    \item it halves the variance when the two variances are equal. \qed
\end{enumerate}
\end{theorem}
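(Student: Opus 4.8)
The plan is to substitute the optimal weights from Theorem~\ref{thm:var_weight_optimal} directly into the variance of $\widehat{y}(x_i)$ and then read the four claimed properties off the resulting closed form. First I would fix notation: taking $\mu_{T,i}$ and $\mu_{S,i}$ to be sample means of $k$ i.i.d.\ teacher and student draws respectively (the reading consistent with the $1/k$ factor in the statement and with the averaging construction of Section~\ref{sec:theory_simple}), and treating $\widehat{\sigma_{T,i}^2}$ as a fixed plug-in value, we have $\operatorname{Var}(\mu_{T,i}) = \widehat{\sigma_{T,i}^2}/k$, $\operatorname{Var}(\mu_{S,i}) = \sigma_{S,i}^2/k$, and zero covariance since the teacher and student draws are independent by construction. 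Writing $a = \widehat{\sigma_{T,i}^2}/k$ and $b = \sigma_{S,i}^2/k$, the target variance is simply $\operatorname{Var}[\widehat{y}(x_i)] = w_{T,i}^2 a + w_{S,i}^2 b$, the cross term vanishing.

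The key step is the standard precision-weighting identity. With $w_{T,i} = (1/a)/(1/a + 1/b)$ and $w_{S,i} = (1/b)/(1/a + 1/b)$ from Theorem~\ref{thm:var_weight_optimal}, a one-line simplification gives $w_{T,i}^2 a + w_{S,i}^2 b = 1/(1/a + 1/b) = ab/(a+b)$; substituting $a$ and $b$ and cancelling a factor of $k$ yields exactly $\operatorname{Var}[\widehat{y}(x_i)]_{\min} = \frac{1}{k}\cdot\frac{\widehat{\sigma_{T,i}^2}\sigma_{S,i}^2}{\widehat{\sigma_{T,i}^2}+\sigma_{S,i}^2}$. I would organize the argument around the precisions $\tau_T = 1/a$ and $\tau_S = 1/b$: the combined precision is $\tau_T + \tau_S$ and the combined variance is its reciprocal, which makes the remaining properties immediate.

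From $1/\operatorname{Var}[\widehat{y}(x_i)]_{\min} = \tau_T + \tau_S > \max(\tau_T,\tau_S)$ we get Property~1, namely that the combined variance is strictly below the variance obtained from either model alone. Properties~2 and~3 follow by monotonicity: $ab/(a+b)$ is strictly increasing in each of $a$ and $b$ (equivalently, $w_{T,i}$ is decreasing in $\widehat{\sigma_{T,i}^2}$ and tends to $0$ as $\widehat{\sigma_{T,i}^2}\to\infty$, and symmetrically for the student), so a noisy teacher or a noisy student is automatically downweighted. Property~4 is the special case $\widehat{\sigma_{T,i}^2} = \sigma_{S,i}^2 = \sigma^2$, where the formula collapses to $\sigma^2/(2k)$, exactly half of the common single-source variance $\sigma^2/k$.

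The main point requiring care—rather than a genuine obstacle—is justifying the two modeling assumptions that make the cross term disappear and let $1/k$ factor out cleanly: independence of the teacher and student draws, which holds by construction of the sampling procedure, and treating $\widehat{\sigma_{T,i}^2}$ as a fixed plug-in, so that ``minimum achievable variance'' is understood conditionally on the estimated teacher variance. An unconditional statement would additionally require controlling the estimation error in $\widehat{\sigma_{T,i}^2}$, but that lies outside the scope of the stated result; everything else is a routine simplification of the same precision-weighting identity already underlying Theorem~\ref{thm:var_weight_optimal}.
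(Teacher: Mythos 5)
Your proposal is correct and follows essentially the same route as the paper: substitute the inverse-variance weights from Theorem~\ref{thm:var_weight_optimal} into $\operatorname{Var}[\widehat{y}(x_i)] = \frac{1}{k}w_{T,i}^2\widehat{\sigma_{T,i}^2} + \frac{1}{k}w_{S,i}^2\sigma_{S,i}^2$, simplify to the harmonic-mean form $\frac{1}{k}\cdot\frac{ab}{a+b}$, and read off the four properties (the paper proves property~1 via $\frac{ab}{a+b}<a$ for positive $a,b$, which is just the reciprocal form of your precision inequality $\tau_T+\tau_S>\max(\tau_T,\tau_S)$). Your precision-based packaging and the explicit remarks on independence of the teacher and student draws and on treating $\widehat{\sigma_{T,i}^2}$ as a fixed plug-in are slightly more careful than the paper's presentation but do not change the argument.
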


\subsection{Empirical Evidence in Linear Regression and Neural Networks}

Figures~\ref{fig:linear_regression} and~\ref{fig:neural_network} show the effects of averaging and variance-weighting in linear regression and neural networks using the Boston Housing dataset~\cite{harrison1978hedonic}. In both settings, increasing the number of teacher model responses steadily reduces student model variance. Moreover, even using two teacher model responses provides a large improvement over single-response distillation. Variance-weighting consistently yields the most stable student models.

These results confirm that the theoretical benefits of averaging and variance-weighting extend beyond the linear regime.

\begin{figure}[t]
\centering
\begin{subfigure}[t]{0.48\linewidth}
    \centering
    \includegraphics[width=\linewidth]{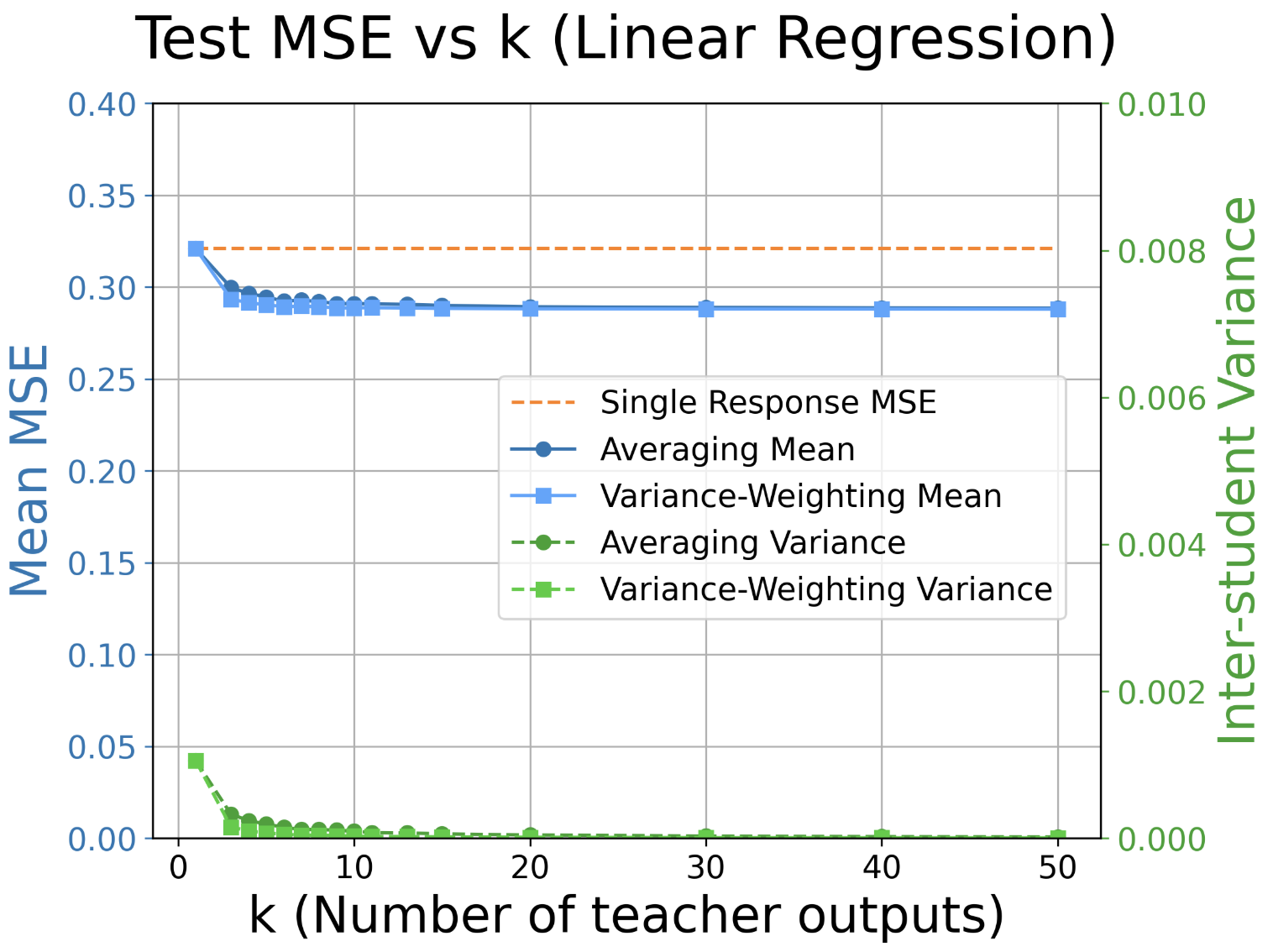}
    \caption{Linear regression.}
    \label{fig:linear_regression}
\end{subfigure}
\hfill
\begin{subfigure}[t]{0.48\linewidth}
    \centering
    \includegraphics[width=\linewidth]{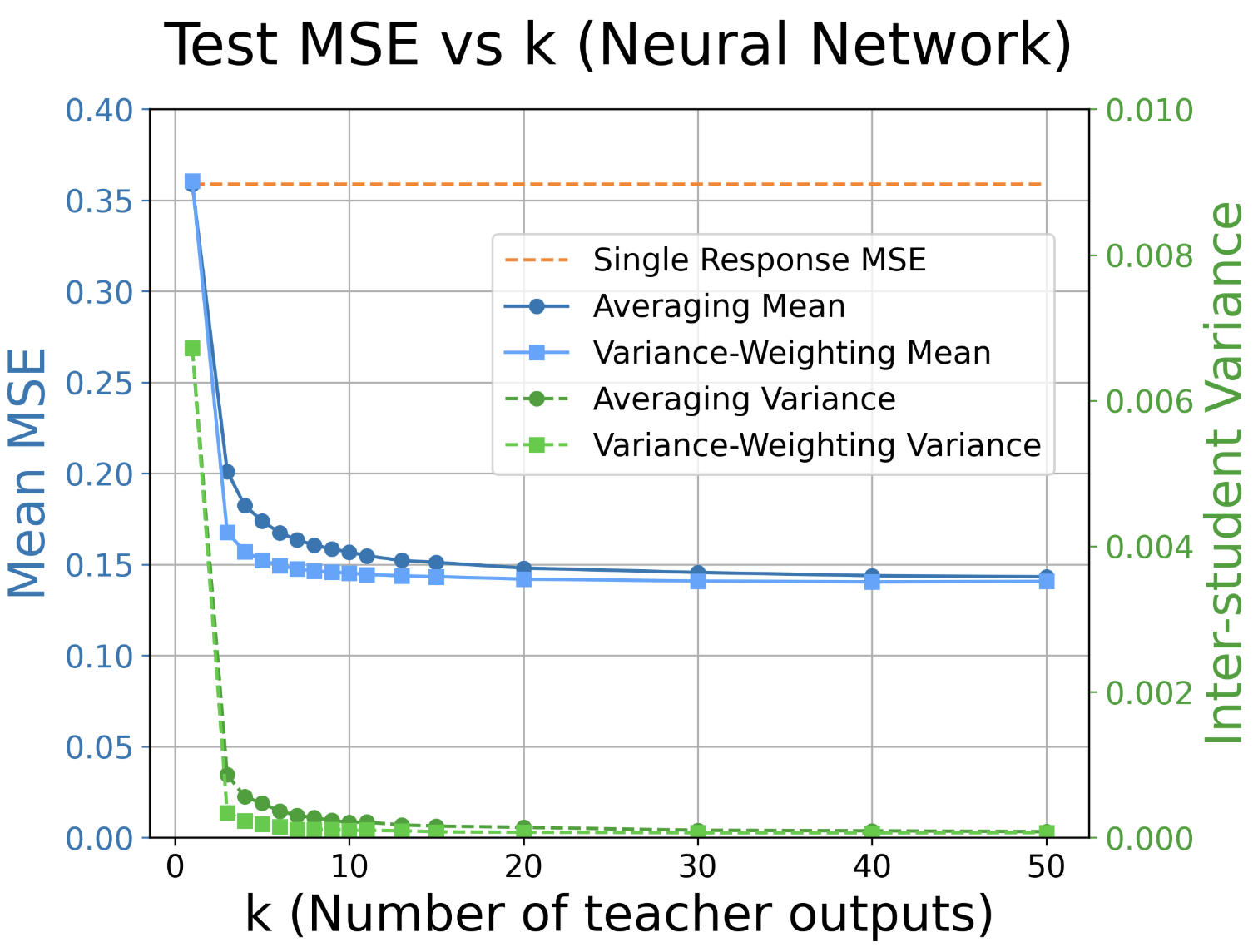}
    \caption{Neural networks.}
    \label{fig:neural_network}
\end{subfigure}
\caption{
Effectiveness of averaging and variance-weighting approaches in (a) linear regression and (b) neural network distillations. The blue curves represent the mean test MSE, the green curves depict the corresponding inter-student variance, and the orange line denotes the standard distillation pipeline with one teacher output. 
}
\label{fig:avg_var_effectiveness_simple}
\end{figure}

\subsection{Extending Methods to LLMs}
\label{sec:reduce_noise_llm}

The same variance-aware principles extend naturally to the distillation of LLMs. Unlike scalar regression targets, natural language responses are high-dimensional and structured, encoding substantially richer information. Sampling multiple responses from a teacher LLM therefore provides a more faithful empirical approximation of the teacher’s predictive distribution. We use this multi-response setting as an additional baseline for LLM distillation.

Under standard single-response distillation, the student is trained on a single teacher-generated sequence $\mathbf{y}^*$. In contrast, multi-response distillation minimizes the objective $\mathcal{L}_{\mathrm{LLM}}^{(k)} = -\frac{1}{k}\sum_{i=1}^{k} \log p_S(\mathbf{y}^{*(i)} \mid \mathbf{x})$, where $\{\mathbf{y}^{*(i)}\}_{i=1}^k$ are samples from the teacher model. This objective more closely approximates the ideal, but intractable, distribution-matching formulation by exposing the student to multiple modes of the teacher’s output distribution.

As shown in Table~\ref{tab:noise_methods}, increasing the number of teacher responses consistently improves alignment with ground-truth answers and reduces hallucination. Among the evaluated strategies, variance-weighting yields the largest and most consistent gains.

We further evaluate how well student models inherit the teacher’s noise structure, where systematic noise serves as a proxy for reducible epistemic uncertainty introduced by the distillation process (see Appendix~\ref{app:noise_exps}). Both averaging and variance-weighting reduce systematic noise and increase the correlation between student and teacher variability, with variance-weighting again providing the strongest alignment.

Overall, these results demonstrate that leveraging multiple teacher responses—particularly when combined with variance-weighting—offers a simple and effective mechanism for reducing inter-student uncertainty in LLM distillation. The resulting student models more faithfully capture the teacher’s mean behavior while exhibiting substantially improved stability.

\begin{table}[t]
\centering
\tiny
\caption{
Comparison of noise characteristics for LLM students distilled using multiple teacher responses. $R^2$ denotes the correlation between the teacher's and student's noise.
}
\label{tab:noise_methods}
\begin{tabular}{cc|cc|ccc}
\toprule
 &  & \multicolumn{2}{c|}{\textbf{Cosine Sim. to Ground Truth}} 
 & \multicolumn{3}{c}{\textbf{Noise from Knowledge Distillation}} \\
\cmidrule(lr){3-4} \cmidrule(lr){5-7}
\textbf{$k$} & \textbf{Method} 
& \textbf{Mean} & \textbf{Inter-Stud. Var.} 
& $\mathbf{R^2}$ & \textbf{Avg. Noise} & \textbf{Avg. System. Noise} \\
\midrule
-- & Direct FT & -- & -- & 0.151 & 0.505 & -- \\
\midrule
1 & Single Response & 0.319 & 0.015 & 0.071 & 0.678 & 0.173 \\
\midrule
\multirow{3}{*}{3} 
 & Multi-Response & 0.346 & 0.016 & 0.109 & 0.663 & 0.158 \\
 & Averaging & 0.352 & 0.016 & 0.118 & 0.655 & 0.150 \\
 & Variance-Weighting & 0.353 & 0.016 & 0.129 & 0.658 & 0.153 \\
\midrule
\multirow{3}{*}{5} 
 & Multi-Response & 0.353 & 0.017 & 0.138 & 0.657 & 0.152 \\
 & Averaging & 0.365 & 0.017 & 0.144 & 0.641 & 0.136 \\
 & Variance-Weighting & 0.365 & 0.017 & 0.152 & 0.644 & 0.139 \\
\bottomrule
\end{tabular}
\end{table}

\section{Discussion and Conclusion}
\label{sec:con}

This paper examined how uncertainty propagates through sequence-level knowledge distillation and identified systematic mismatches between teacher and student models. While knowledge distillation is effective for compression and accuracy, standard single-response distillation suppresses intra-student uncertainty, leading students to underestimate teacher predictive entropy and produce overly narrow distributions. At the same time, knowledge distillation does not reliably eliminate inter-student uncertainty: small differences in initialization can result in persistent variability across independently distilled students.

We show that these issues can be addressed with simple variance-aware strategies. Averaging multiple teacher responses reduces distillation noise at rate $\mathcal{O}(1/k)$, while variance-weighting yields minimum-variance combinations of teacher and student estimates. These methods are theoretically optimal in linear models and empirically effective in neural networks and LLMs, consistently improving alignment and reducing systematic noise.

Overall, our results reframe knowledge distillation as an \emph{uncertainty transformation} rather than mere compression. Students may match teachers on benchmarks yet behave differently when uncertainty matters, risking overconfidence or hallucination in high-stakes settings. Future work should explore uncertainty across scales and domains and design knowledge distillation objectives that balance fidelity with uncertainty preservation, enabling distilled models to remain both accurate and appropriately uncertain.

\printbibliography

\appendix

\section{Mathematical Proofs}
\label{app:proofs}

\subsection{Proof of Theorem~\ref{thm:inter_student_variance}}

\begin{proof}
OLS theory gives
\[
\operatorname{Var}(\widehat{\theta}_S) = \sigma_T^2(X^\top X)^{-1}.
\]
For any test input $x_\star$,
\[
\operatorname{Var}\big[\widehat{f}_S(x_\star)\big]
= \sigma_T^2\, x_\star^\top (X^\top X)^{-1} x_\star,
\]
which, when averaged over test inputs, yields a quantity linear in $\sigma_T^2$.
\end{proof}

\subsection{Proof of Theorem~\ref{thm:teacher_output_mse}}

\begin{proof}
First, we write the OLS estimator using the noisy teacher labels:
\begin{equation}
\label{eq:student model parameter}
    \begin{split}
        \widehat\theta_S=(X^\top X)^{-1}X^\top y^{(T)}
    =\theta_T + (X^\top X)^{-1}X^\top\varepsilon
    \end{split}
\end{equation}
For a test input $x_i$, the prediction error relative to the teacher mapping is
\[
x_i^\top\widehat\theta_S - x_i^\top\theta_T
    = x_i^\top (X^\top X)^{-1}X^\top\varepsilon.
\]
Stacking all test inputs gives the error vector
\[
\Delta = X_{\mathrm{test}}(\widehat\theta_S-\theta_T)
    = X_{\mathrm{test}}(X^\top X)^{-1}X^\top\varepsilon.
\]
Thus the evaluation MSE is
\begin{equation*}
    \begin{split}
        \mathcal{L}_{\mathrm{eval}}^{(T)} = \frac{1}{n}\,\Delta^\top\Delta
    = \frac{1}{n}\,\varepsilon^\top X (X^\top X)^{-1} X_{\mathrm{test}}^\top 
    X_{\mathrm{test}} (X^\top X)^{-1} X^\top \varepsilon
    \end{split}
\end{equation*}

Taking expectation over $\varepsilon$, using 
$\mathbb{E}[\varepsilon\varepsilon^\top]=\sigma_T^2 I_n$ and the cyclic property of trace,
\[
\mathbb{E}[\mathcal{L}_{\mathrm{eval}}^{(T)}]
    =\frac{\sigma_T^2}{n}\operatorname{tr}\!\big( X_{\mathrm{test}} (X^\top X)^{-1} X_{\mathrm{test}}^\top \big),
\]
which shows linear dependence on $\sigma_T^2$.
\end{proof}

\subsection{Proof of Theorem~\ref{thm:ground_truth_mse}}

\begin{proof}
Given Equation~\ref{eq:student model parameter}, the student prediction on the test inputs is
\[
X_{\mathrm{test}}\widehat{\theta}_S 
= X_{\mathrm{test}}\theta_T 
  + X_{\mathrm{test}}(X^\top X)^{-1}X^\top \varepsilon.
\]
Subtracting the true targets gives
\[
X_{\mathrm{test}}\widehat{\theta}_S - y^{\mathrm{true}}
= X_{\mathrm{test}}(\theta_T - \theta^*) 
+ X_{\mathrm{test}}(X^\top X)^{-1}X^\top \varepsilon 
- \eta.
\]
For clarity, define:
\[
A = X_{\mathrm{test}}(\theta_T - \theta^*), \quad
B = X_{\mathrm{test}}(X^\top X)^{-1}X^\top \varepsilon, \quad
C = \eta.
\]
Then
\[
X_{\mathrm{test}}\widehat{\theta}_S - y^{\mathrm{true}} = A + B - C
\]
and the evaluation loss is
\[
\mathcal{L}_{\mathrm{eval}} 
= \frac{1}{n}\|A + B - C\|^2
= \frac{1}{n}(A + B - C)^\top(A + B - C).
\]
Expanding the quadratic form:
\[
\mathcal{L}_{\mathrm{eval}} 
= \tfrac{1}{n}\big(A^\top A + B^\top B + C^\top C 
+ 2A^\top B - 2A^\top C - 2B^\top C\big).
\]

Then, we take $\mathbb{E}[\cdot]$ over both $\varepsilon$ and $\eta$. The term $A^\top A$ is deterministic, so
\(
\mathbb{E}[A^\top A] = A^\top A = \|X_{\mathrm{test}}(\theta_T - \theta^*)\|^2.
\)
Since $B$ depends linearly on $\varepsilon$, \[
B^\top B = \varepsilon^\top (X_{\mathrm{test}}(X^\top X)^{-1}X^\top)^\top X_{\mathrm{test}}(X^\top X)^{-1}X^\top \varepsilon.
\]

Using $\mathbb{E}[\varepsilon\varepsilon^\top] = \sigma_T^2 I_n$ and the trace identity $\mathbb{E}[\varepsilon^\top M \varepsilon] = \operatorname{tr}(M\,\mathbb{E}[\varepsilon\varepsilon^\top])$ for any matrix $M$, we obtain
\begin{equation*}
\begin{split}
\mathbb{E}[B^\top B]
&= \operatorname{tr}((X_{\mathrm{test}}(X^\top X)^{-1}X^\top)^\top X_{\mathrm{test}}(X^\top X)^{-1}X^\top\, \mathbb{E}[\varepsilon\varepsilon^\top])\\
&= \sigma_T^2 \operatorname{tr}(X^\top X(X^\top X)^{-1}X_{\mathrm{test}}^\top X_{\mathrm{test}}(X^\top X)^{-1})\\
&= \sigma_T^2 \operatorname{tr}\!\big( X_{\mathrm{test}} (X^\top X)^{-1} X_{\mathrm{test}}^\top \big)
\end{split}
\end{equation*}

For $C^\top C$, since $\mathbb{E}[\eta\eta^\top] = \sigma_\eta^2 I_n$,
\(
\mathbb{E}[C^\top C] = \operatorname{tr}(\sigma_\eta^2 I_n) = n\sigma_\eta^2.
\)

Finally, all cross terms $A^\top B$, $A^\top C$, and $B^\top C$ depends on $\mathbb{E}[\varepsilon]$ or $\mathbb{E}[\eta]$, thus $\mathbb{E}[A^\top B] = \mathbb{E}[A^\top C] = \mathbb{E}[B^\top C]= 0$ since $\varepsilon$ and $\eta$ are independent and both have zero mean.

Substituting the expectations back into the expansion,
\begin{equation*}
    \begin{split}
        \mathbb{E}[\mathcal{L}_{\mathrm{eval}}]
= \tfrac{1}{n}\|X_{\mathrm{test}}(\theta_T - \theta^*)\|
+ \tfrac{\sigma_T^2}{n}\operatorname{tr}\!\big( X_{\mathrm{test}} (X^\top X)^{-1} X_{\mathrm{test}}^\top \big)
+ \sigma_\eta^2
    \end{split}
\end{equation*}

The first term represents the bias from mismatch between the teacher and ground-truth parameters, the second term scales linearly with the teacher noise variance $\sigma_T^2$, and the third term is the irreducible evaluation noise.
\end{proof}

\subsection{Proof of Theorem~\ref{thm:teacher_bootstrap_degenerate}}
\label{app:proof_teacher_bootstrap_degenerate}

\begin{proof}
Recall the setting of Theorem~\ref{thm:teacher_bootstrap_degenerate}. The teacher is the OLS solution $\hat{\theta}_T = (X^\top X)^{-1} X^\top y$, 
and the teacher-labeled dataset is $\mathcal{D}_T = \{(x_i, \hat{y}_{T,i})\}_{i=1}^n$ and $\hat{y}_{T,i} = x_i^\top \hat{\theta}_T$. Let $I = (i_1,\dots,i_m)$ denote a bootstrap sample of indices drawn with replacement from $\{1,\dots,n\}$. The students are then trained on 
\[
    X_I \in \mathbb{R}^{m \times d}
    \quad\text{with rows } x_{i_j}^\top,
    \qquad
    \hat{y}_{T,I} \in \mathbb{R}^m
    \quad\text{with entries } \hat{y}_{T,i_j}.
\]
By construction, $\hat{y}_{T,i_j} = x_{i_j}^\top \hat{\theta}_T$, thus $\hat{y}_{T,I} = X_I \hat{\theta}_T$.

The student is defined as the OLS estimator on the bootstrapped teacher-labeled data, which gives $\hat{\theta}_S(I) = (X_I^\top X_I)^{-1} X_I^\top \hat{y}_{T,I}$. Substituting $\hat{y}_{T,I} = X_I \hat{\theta}_T$ yields
\[
    \hat{\theta}_S(I)
    = (X_I^\top X_I)^{-1} X_I^\top X_I \hat{\theta}_T
    = \hat{\theta}_T.
\]
Thus, for every bootstrap sample $I$ with $X_I^\top X_I$ invertible, the student parameter coincides exactly with the teacher parameter, and for any test input $x$ we have
\[
    f_S(x; \hat{\theta}_S(I)) = x^\top \hat{\theta}_S(I) = x^\top \hat{\theta}_T.
\]
Since this holds for all bootstrap replicates $b$, the empirical bootstrap distribution of $f_S^{(b)}(x)$ is a point mass at $x^\top \hat{\theta}_T$, and therefore the bootstrap predictive variance is zero.
\end{proof}

\subsection{Proof of Theorem~\ref{thm:groundtruth_bootstrap_variance}}

\begin{proof}
We work under the standard linear model, with $X \in \mathbb{R}^{n \times d}$ having i.i.d.\ rows and second-moment matrix $\Sigma_X = \mathbb{E}[x x^\top]$ that is positive definite. Let $I = (i_1,\dots,i_m)$ be a bootstrap sample of indices drawn with replacement from $\{1,\dots,n\}$, and define $X_I$ and $y_I$ analogously to those in Appendix~\ref{app:proof_teacher_bootstrap_degenerate}.

The student is the OLS estimator on $(X_I, y_I)$,
\[
    \hat{\theta}_S(I) = (X_I^\top X_I)^{-1} X_I^\top y_I.
\]
Using $y_I = X_I \theta + \eta_I$, where $\eta_I \in \mathbb{R}^m$ collects the resampled noise terms, we can write
\[
    \hat{\theta}_S(I)
    = (X_I^\top X_I)^{-1} X_I^\top (X_I \theta + \eta_I)
    = \theta + (X_I^\top X_I)^{-1} X_I^\top \eta_I.
\]
Conditioned on $X_I$, we have $\mathbb{E}[\eta_I \mid X_I] = 0$ and $\text{Var}(\eta_I \mid X_I) = \sigma^2 I_m$, so $\mathbb{E}[\hat{\theta}_S(I) \mid X_I] = \theta$, and $\text{Var}[\hat{\theta}_S(I) \mid X_I] = \sigma^2 (X_I^\top X_I)^{-1}$.

For sufficiently large $m$, the empirical second-moment matrix of the bootstrapped design,
\[
    \frac{1}{m} X_I^\top X_I
    = \frac{1}{m} \sum_{j=1}^m x_{i_j} x_{i_j}^\top,
\]
converges in probability to the population second-moment matrix $\Sigma_X = \mathbb{E}[x x^\top]$ by the law of large numbers. Thus, $X_I^\top X_I \approx m \Sigma_X$, and hence
\[
    (X_I^\top X_I)^{-1}
    \approx (m \Sigma_X)^{-1}
    = \frac{1}{m} \Sigma_X^{-1}.
\]

Now fix a test input $x \in \mathbb{R}^d$. The student prediction at $x$ is $f_S(x; \hat{\theta}_S(I)) = x^\top \hat{\theta}_S(I)$, and the bootstrap-induced predictive variance is
\begin{equation*}
    \begin{split}
        \text{Var}_{\mathrm{boot}}\big[f_S(x)\big]
    &= \text{Var}_{\mathrm{boot}}\big[x^\top \hat{\theta}_S(I)\big]\\
    &= x^\top \big( \sigma^2 (X_I^\top X_I)^{-1} \big) x\\
    &\approx \frac{\sigma^2}{m} \, x^\top \Sigma_X^{-1} x
    \end{split}
\end{equation*}

Thus, the ground-truth-bootstrap-induced predictive variance at $x$ decays on the order of $1/m$ as the bootstrap sample size increases, as claimed in Theorem~\ref{thm:groundtruth_bootstrap_variance}.
\end{proof}

\subsection{Proof of Theorem~\ref{thm:groundtruth_bootstrap_mse}}

\begin{proof}
As shown in the proof of Theorem~\ref{thm:groundtruth_bootstrap_variance}, $\hat{\theta}_S(I) = \theta + (X_I^\top X_I)^{-1} X_I^\top \eta_I$ and $\text{Var}[\hat{\theta}_S(I) \mid X_I] = \sigma^2 (X_I^\top X_I)^{-1}$.

Let $(x,y)$ be an independent test pair drawn from the same distribution as the training data. The student’s prediction at $x$ is $f_S(x; \hat{\theta}_S(I)) = x^\top \hat{\theta}_S(I)$. The expected test MSE is $R_m = \mathbb{E}\big[(x^\top \hat{\theta}_S(I) - y)^2\big]$.

Using $y = x^\top \theta + \eta'$ with $\eta'$ independent of $\hat{\theta}_S(I)$ and distributed as $\mathcal{N}(0,\sigma^2)$, we obtain
\[
    x^\top \hat{\theta}_S(I) - y
    =
    x^\top \hat{\theta}_S(I) - x^\top \theta - \eta'
    =
    x^\top (\hat{\theta}_S(I) - \theta) - \eta'.
\]
By independence and zero mean of $\eta'$, a bias--variance decomposition gives
\begin{align*}
    R_m
    &= \mathbb{E}\big[(x^\top (\hat{\theta}_S(I) - \theta) - \eta')^2\big] \\
    &= \mathbb{E}\big[(x^\top (\hat{\theta}_S(I) - \theta))^2\big]
       + \mathbb{E}[(\eta')^2]
       - 2\,\mathbb{E}\big[x^\top (\hat{\theta}_S(I) - \theta)\,\eta'\big] \\
    &= \mathbb{E}\big[(x^\top (\hat{\theta}_S(I) - \theta))^2\big]
       + \sigma^2,
\end{align*}
since the cross term vanishes by independence and zero mean, and $\mathbb{E}[(\eta')^2] = \sigma^2$.

Conditioning on $X_I$ and using $\mathbb{E}[\hat{\theta}_S(I) \mid X_I] = \theta$, we have
\begin{equation*}
    \begin{split}
        \mathbb{E}\big[(x^\top (\hat{\theta}_S(I) - \theta))^2\big]
    &= \mathbb{E}\big[x^\top \text{Var}(\hat{\theta}_S(I) \mid X_I)\, x\big]\\
    &= \sigma^2\, \mathbb{E}\big[x^\top (X_I^\top X_I)^{-1} x\big].
    \end{split}
\end{equation*}
From the previous proof, $(X_I^\top X_I)^{-1} \approx \frac{1}{m} \Sigma_X^{-1}$. Substituting this approximation yields
\begin{equation*}
    \mathbb{E}\big[x^\top (X_I^\top X_I)^{-1} x\big] \approx \frac{1}{m} \,\mathbb{E}\big[x^\top \Sigma_X^{-1} x\big]\approx \frac{1}{m} \,\mathbb{E}\big[\mathrm{tr}(x^\top \Sigma_X^{-1} x)\big].
\end{equation*}
By properties of trace, 
\begin{equation*}
    \mathbb{E}\big[x^\top (X_I^\top X_I)^{-1} x\big] 
    \approx \frac{1}{m} \,\mathrm{tr}\big(\Sigma_X^{-1} \mathbb{E}[x x^\top]\big) \approx \frac{1}{m} \,\mathrm{tr}(\Sigma_X^{-1} \Sigma_X)\approx \frac{1}{m} \,d.
\end{equation*}
Combining the above expressions, we obtain
\[
    \mathbb{E}\big[(x^\top (\hat{\theta}_S(I) - \theta))^2\big]
    \approx \frac{\sigma^2}{m} \, d,
\]
and hence
\[
    R_m
    = \sigma^2 + \mathbb{E}\big[(x^\top (\hat{\theta}_S(I) - \theta))^2\big]
    \approx \sigma^2 + \frac{\sigma^2 d}{m}.
\]

The right-hand side is a decreasing function of $m$ that converges to $\sigma^2$ as $m \to \infty$. If we consider a no-bootstrap student trained once on $n$ samples by OLS, the same calculation (with $m$ replaced by $n$) yields an approximate expected test MSE
\(
    R_n \approx \sigma^2 + \frac{\sigma^2 d}{n},
\)
so $R_m$ approaches $R_n$ as $m$ increases towards $n$. This completes the proof.
\end{proof}

\subsection{Proof of Theorem~\ref{thm:averaging_var}}

\begin{proof}
For each training input \(x_i\), since the teacher responses are i.i.d.\ with mean \(f_T(x_i)\) and variance \(\sigma_T^2\), the sample mean satisfies
\[
\mathbb{E}[\mu_{T,i}] = f_T(x_i), 
\qquad
\operatorname{Var}[\mu_{T,i}] = \frac{\sigma_T^2}{k}.
\]
Stacking these over all inputs gives
\(\operatorname{Var}[\mu_T] = \frac{\sigma_T^2}{k} I_n.\)
Substituting $\mu_T$ in place of $y^{(T)}$ in the OLS solution
\(\widehat{\theta}_S = (X^\top X)^{-1} X^\top \mu_T\),
we have
\begin{equation*}
    \begin{split}
        \operatorname{Var}(\widehat{\theta}_S) = (X^\top X)^{-1} X^\top \operatorname{Var}(\mu_T) X (X^\top X)^{-1}
        = \frac{\sigma_T^2}{k}(X^\top X)^{-1}.
    \end{split}
\end{equation*}

Hence, both the dataset noise and the resulting student parameter variance scale inversely with \(k\) and decay at a rate of \(\mathcal{O}(1/k)\).

For a fixed test input \(x_\star\) the student prediction is \(\widehat{f}_S(x_\star)=x_\star^\top\widehat{\theta}_S\). Hence
\[
\operatorname{Var}\big(\widehat{f}_S(x_\star)\big)
= x_\star^\top \operatorname{Var}(\widehat{\theta}_S) x_\star
= \frac{\sigma_T^2}{k}\; x_\star^\top (X^\top X)^{-1} x_\star,
\]
which exhibits the claimed \(\mathcal{O}(1/k)\) dependence.
\end{proof}

\subsection{Proof of Theorem~\ref{thm:averaging_umvue}}

\begin{proof}
Each teacher response \(y_{i,j}^{(T)}\) is normally distributed with mean \(f_T(x_i)\) and variance \(\sigma_T^2\). The sample mean \(\mu_{T,i}\) is an unbiased estimator of \(f_T(x_i)\) and is a function of the complete sufficient statistic for the normal family (the sum of observations).  

By the Lehmann-Scheff\'e theorem~\cite{lehmann2011completeness1,lehmann2011completeness2}, any unbiased estimator that is a function of a complete sufficient statistic is the unique uniformly minimum-variance unbiased estimator (UMVUE). Therefore, \(\mu_{T,i}\) is the UMVUE of \(f_T(x_i)\).
\end{proof}

\subsection{Proof of Theorem~\ref{thm:var_weight_optimal}}

\begin{proof}
First, 
\begin{equation*}
    \begin{split}
        \operatorname{Var}[\widehat{y}(x_i)] &= \operatorname{Var}[w_{T,i}\,\mu_{T,i} + w_{S,i}\,\mu_{S,i}]\\
        &= w_{T,i}^2\cdot \operatorname{Var}[\mu_{T,i}] + w_{S,i}^2\cdot \operatorname{Var}[\mu_{S,i}]\\
        &= w_{T,i}^2\cdot \frac{\sigma^2_{T,i}}{k} + w_{S,i}^2\cdot \frac{\sigma^2_{S,i}}{k}\\
    \end{split}
\end{equation*}

Substituting \(w_{S,i} = 1 - w_{T,i}\) into the expression for the variance yields
\[
\operatorname{Var}[\widehat{y}(x_i)] 
= \frac{1}{k}\;w_{T,i}^2\widehat{\sigma_{T,i}^2} + \frac{1}{k}\;(1 - w_{T,i})^2\sigma_{S,i}^2.
\]
Differentiating with respect to \(w_{T,i}\) and setting the derivative to zero gives:
\begin{equation*}
    \begin{split}
        \frac{\partial \operatorname{Var}[\widehat{y}(x_i)]}{\partial w_{T,i}}
&= \frac{2}{k}\;w_{T,i}\widehat{\sigma_{T,i}^2} - \frac{2}{k}\;(1 -w_{T,i})\sigma_{S,i}^2 = 0\\
w_{T,i}\widehat{\sigma_{T,i}^2} &= \sigma_{S,i}^2 - w_{T,i}\sigma_{S,i}^2\\
w_{T,i} &= \frac{\sigma_{S,i}^2}{\widehat{\sigma_{T,i}^2} + \sigma_{S,i}^2}= \frac{1/\widehat{\sigma_{T,i}^2}}{1/\widehat{\sigma_{T,i}^2} + 1/\sigma_{S,i}^2}
    \end{split}
\end{equation*}

It follows that
\[
w_{S,i} = 1 - w_{T,i} = \frac{1/\sigma_{S,i}^2}{1/\widehat{\sigma_{T,i}^2} + 1/\sigma_{S,i}^2}.
\]

The second derivative is
\[
\frac{\partial^2 \operatorname{Var}[\widehat{y}(x_i)]}{\partial w_{T,i}^2} = \frac{2}{k}\;\widehat{\sigma_{T,i}^2} + \frac{2}{k}\;\sigma_{S,i}^2>0,
\]
so this solution is the unique global minimizer.
\end{proof}

\subsection{Proof of Theorem~\ref{thm:var_weight_properties}}

\begin{proof}
From the previous result,
\[
\operatorname{Var}[\widehat y]_{\min}
= \frac{1}{k}\, w_{T,i}^2 \widehat{\sigma}_{T,i}^2
  + \frac{1}{k}\, w_{S,i}^2 \sigma_{S,i}^2 .
\]
Substituting the optimal inverse-variance weights
\[
w_{T,i} = \frac{1/\widehat{\sigma}_{T,i}^2}{1/\widehat{\sigma}_{T,i}^2 + 1/\sigma_{S,i}^2},
\qquad
w_{S,i} = \frac{1/\sigma_{S,i}^2}{1/\widehat{\sigma}_{T,i}^2 + 1/\sigma_{S,i}^2},
\]
and simplifying yields
\[
\operatorname{Var}[\widehat y]_{\min}
= \frac{1}{k}\cdot
\frac{\widehat{\sigma}_{T,i}^2\,\sigma_{S,i}^2}
{\widehat{\sigma}_{T,i}^2 + \sigma_{S,i}^2}.
\]


To prove property (1), note that for any positive $a,b$,
\[
\frac{ab}{a+b} < a.
\]
Applying this with $a=\widehat{\sigma}_{T,i}^2$ and $b=\sigma_{S,i}^2$ yields
\[
\operatorname{Var}[\widehat y]_{\min}
= \frac{1}{k}\cdot \frac{\widehat{\sigma}_{T,i}^2\sigma_{S,i}^2}{\widehat{\sigma}_{T,i}^2+\sigma_{S,i}^2}
< \frac{1}{k}\cdot \widehat{\sigma}_{T,i}^2,
\]
as claimed.
The proof showing that \(\operatorname{Var}[\widehat{y}]_{\min} < \frac{1}{k}\cdot \sigma_{S,i}^2\) follows similarly.

The rest of the properties follow trivially from the variance formulation.
\end{proof}

\section{Additional Experimental Details}
\label{app:additional-details}

\subsection{Dataset Descriptions}
\label{app:datasets_all}

We provide the detailed descriptions of the datasets used in our study here. All input features are standardized to zero mean and unit variance, and the data are split into training and test sets with an 80/20 ratio, unless otherwise noted. Classification datasets use a stratified split to preserve class proportions.
\begin{itemize}
    \item \textbf{Boston Housing dataset~\cite{harrison1978hedonic}} is a standard regression benchmark comprising 506 samples of median house prices in the Boston metropolitan area. Each sample is characterized by 13 numerical and categorical attributes, such as the average number of rooms, local crime rate, and property tax rate.
    \item \textbf{BioASQ~\cite{krithara2023bioasq}} is a biomedical QA benchmark consisting of expert-annotated training and test pairs. The teacher is fine-tuned on the BioASQ training set, after which knowledge distillation is applied to 10 pre-trained DistilGPT2 models during supervised fine-tuning to approximate the benefits of knowledge distillation while maintaining computational efficiency~\cite{nguyen2025smoothing}.
    \item \textbf{Digits dataset~\cite{optical_recognition_of_handwritten_digits_80}} consists of $8\times8$ grayscale images of handwritten digits (0--9), flattened into 64-dimensional feature vectors. 
\end{itemize}

\begin{table*}[t]
\centering\small
\caption{Empirical validation of the entropy inequality across teacher model temperatures.}
\label{tab:entropy_comparison}
\begin{tabular}{c|cccccc}
\toprule
Teacher Model Temp. & 0.5 & 0.8 & 1.0 & 1.2 & 1.5 & 2.0 \\
\midrule
\% Prompts with $H(\mathbb{E}[p_S])\le H(p_T)$ 
& 99.96\% & 99.98\% & 100\% & 100\% & 100\% & 100\% \\
\bottomrule
\end{tabular}
\end{table*}

\subsection{Predictive Distribution Experiment Setup in LLM Distillation}
\label{app:llm_predictive_distribution}

We estimate predictive distributions for both teacher and student LLM models using 1,000 generated responses per model, subsequently projecting them onto a single principal component for clarity. For visualization, only one student model is shown, though the pattern is consistent across multiple students.

\subsection{Bootstrap Experiment Setup}
\label{app:bootstrap_setup}

We validate Theorems~\ref{thm:teacher_bootstrap_degenerate}, \ref{thm:groundtruth_bootstrap_variance}, and \ref{thm:groundtruth_bootstrap_mse} on the Boston Housing regression task using linear regression and a one-hidden-layer multilayer perceptron.

For bootstrapping, we consider a grid of bootstrap sample sizes $m \in \{ \beta n : \beta \in [0.1, 1.0] \}$ (rounded to the nearest integer). For each $m$ and each variant (teacher bootstrap and ground-truth bootstrap), we draw $B$ bootstrap samples (we use $B=1,000$ in our experiments) and fit one student per sample. For every student, we compute the test MSE on the fixed test set. For each value of $m$, we then report the mean and variance of the test MSE across the $B$ students for both variants.

\section{Suppression Effect in Model Predictive Distribution}
\label{app:suppression_effect}

We formalize the suppression effect in the student model predictive distribution discussed in Section~\ref{sec:uncertainty_llm} using an entropy inequality.

\begin{theorem}[Suppression effect]
\label{thm:lower student var}
Suppose the student model is trained using one sampled teacher response per input. Assuming that teacher outputs follow $p_T(\mathbf{y}\mid\mathbf{x})$, if the entropy $H(\mathbb{E}[p_S(\mathbf{y}\mid\mathbf{x})]) \le H(p_T(\mathbf{y}\mid\mathbf{x}))$, then $\mathbb{E}[H(p_S(\mathbf{y}\mid\mathbf{x}))] \le H(p_T(\mathbf{y}\mid\mathbf{x}))$.
\qed
\end{theorem}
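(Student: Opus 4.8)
The plan is to reduce the claim to a single application of Jensen's inequality together with the stated hypothesis. The object $p_S(\mathbf{y}\mid\mathbf{x})$ is random: the student is trained on a single sampled teacher response $\mathbf{y}^*\sim p_T(\cdot\mid\mathbf{x})$, so the learned parameters $\boldsymbol{\theta_S}$—and hence the student predictive distribution—depend on that draw. Throughout, $\mathbb{E}[\cdot]$ denotes expectation over this training randomness (for a fixed input $\mathbf{x}$ and fixed initialization), so $\mathbb{E}[p_S(\mathbf{y}\mid\mathbf{x})]$ is the mixture $\bar p_S(\mathbf{y}\mid\mathbf{x}) := \mathbb{E}[p_S(\mathbf{y}\mid\mathbf{x})]$, which is itself a genuine probability distribution on $\mathcal{Y}$, being a convex combination of probability distributions.

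First I would record the structural fact that drives everything: the Shannon entropy functional $H(p) = -\sum_{\mathbf{y}} p(\mathbf{y})\log p(\mathbf{y})$ is concave on the probability simplex, since $t\mapsto -t\log t$ is concave and a finite sum of concave functions is concave. Applying Jensen's inequality to the concave functional $H$ and the random distribution $p_S(\cdot\mid\mathbf{x})$ gives $\mathbb{E}[H(p_S(\mathbf{y}\mid\mathbf{x}))] \le H(\mathbb{E}[p_S(\mathbf{y}\mid\mathbf{x})])$. Chaining this with the hypothesis $H(\mathbb{E}[p_S(\mathbf{y}\mid\mathbf{x})]) \le H(p_T(\mathbf{y}\mid\mathbf{x}))$ immediately yields $\mathbb{E}[H(p_S(\mathbf{y}\mid\mathbf{x}))] \le H(p_T(\mathbf{y}\mid\mathbf{x}))$, which is the claim. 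Conceptually, this says that single-sample supervision makes each individual student \emph{sharper} on average (lower expected entropy) than the teacher, even though the averaged student need not be; the antecedent $H(\bar p_S)\le H(p_T)$ is precisely the condition empirically verified in Table~\ref{tab:entropy_comparison}.

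There is essentially no computational obstacle here; the only care needed is in pinning down the probabilistic setup—specifying that the expectation is over the sampled teacher response (and any residual training stochasticity) for a fixed prompt, and checking that $\mathbb{E}[p_S(\cdot\mid\mathbf{x})]$ indeed lies in the simplex so that Jensen applies. If one wanted the continuous-$\mathcal{Y}$ (differential-entropy) version the same argument goes through verbatim, since differential entropy is also concave, provided the relevant integrals exist; I would state the discrete token-level version as the default to sidestep integrability caveats. As a remark I would also note that the Jensen gap $H(\bar p_S) - \mathbb{E}[H(p_S)]$ equals the mutual information $I(\mathbf{y};\omega)$ between the sampled output $\mathbf{y}$ and the training randomness $\omega$—i.e.\ the across-student diversity—so the inequality is tight exactly when all single-sample students collapse to the same predictive distribution.
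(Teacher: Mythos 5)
Your proof is correct and uses exactly the same argument as the paper: concavity of entropy plus Jensen's inequality gives $\mathbb{E}[H(p_S)] \le H(\mathbb{E}[p_S])$, which chains with the stated hypothesis to yield the claim. Your additional care in specifying the probability space and the remark relating the Jensen gap to across-student diversity are nice elaborations, but the core route is identical.
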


\begin{proof}
Entropy is concave. Applying Jensen’s inequality gives
\(
\mathbb{E}[H(p_S(\mathbf{y}\mid\mathbf{x}))] \le H(\mathbb{E}[p_S](\mathbf{y}\mid\mathbf{x})) \le H(p_T(\mathbf{y}\mid\mathbf{x})).
\)
\end{proof}

This shows that, under standard distillation, student models are biased toward lower-entropy (narrower) distributions than their teacher model. 



To test the needed assumption, we approximate entropy via variance in SBERT embedding space using multiple samples from teacher and student models. Across a wide range of teacher model temperatures, we find that the assumption holds in nearly all cases (Table~\ref{tab:entropy_comparison}), validating the theorem in realistic LLM settings.

Note that this theorem does not contradict the results in Section~\ref{sec:llm_student_uncertainty_intra_model}. In that experiment, the teacher model’s outputs used for distillation are fixed, so the teacher model’s predictive entropy during training is effectively zero. By contrast, the student models still produce stochastic outputs at inference time, so Theorem~\ref{thm:lower student var}'s assumptions do not apply.

\section{Optimization Trajectory Analysis for LLM Student Initialization Uncertainty}
\label{app:param_trajectories}

\begin{table*}[t]
\centering
\small
\caption{Predictive entropy of teacher and student models for logistic regression. Values in parentheses in the Dataset column denote accuracy. Significance is determined by overlapping intervals $H_T \pm \sigma_T$ and $H_S \pm \sigma_S$.}
\label{tab:kd_entropy_logreg}
\begin{tabular}{llcccccc}
\toprule
Dataset & Group & $N_T$ & $N_S$ & $H_T \pm \sigma_T$ & $H_S \pm \sigma_S$ & Significant? \\
\midrule
\multirow{3}{*}{Wine (T=97.22\%, S=97.22\%)} & All & 36 & 36 & 0.125 $\pm$ 0.198 & 0.132 $\pm$ 0.210 & No \\
 & Correct & 35 & 35 & 0.109 $\pm$ 0.176 & 0.116 $\pm$ 0.189 & No \\
 & Incorrect & 1 & 1 & 0.689 $\pm$ 0.000 & 0.693 $\pm$ 0.000 &  ---\\
\midrule
\multirow{3}{*}{Breast Cancer (T=97.37\%, S=97.37\%)} & All & 114 & 114 & 0.109 $\pm$ 0.192 & 0.070 $\pm$ 0.160 & No \\
 & Correct & 111 & 111 & 0.097 $\pm$ 0.179 & 0.060 $\pm$ 0.146 & No \\
 & Incorrect & 3 & 3 & 0.539 $\pm$ 0.164 & 0.444 $\pm$ 0.200 & No \\
\midrule
\multirow{3}{*}{Digits (T=96.39\%, S=96.39\%)} & All & 360 & 360 & 0.188 $\pm$ 0.307 & 0.179 $\pm$ 0.308 & No \\
 & Correct & 347 & 347 & 0.163 $\pm$ 0.278 & 0.152 $\pm$ 0.275 & No \\
 & Incorrect & 13 & 13 & 0.847 $\pm$ 0.325 & 0.896 $\pm$ 0.291 & No \\
\midrule
\multirow{3}{*}{MNIST (T=91.90\%, S=91.85\%)} & All & 14000 & 14000 & 0.233 $\pm$ 0.320 & 0.094 $\pm$ 0.213 & No \\
 & Correct & 12866 & 12859 & 0.189 $\pm$ 0.270 & 0.062 $\pm$ 0.162 & No \\
 & Incorrect & 1134 & 1141 & 0.739 $\pm$ 0.396 & 0.449 $\pm$ 0.352 & No \\
\midrule
\multirow{3}{*}{Covertype (T=71.75\%, S=71.69\%)} & All & 10000 & 10000 & 0.800 $\pm$ 0.208 & 0.408 $\pm$ 0.277 & No \\
 & Correct & 7175 & 7169 & 0.769 $\pm$ 0.208 & 0.376 $\pm$ 0.267 & No \\
 & Incorrect & 2825 & 2831 & 0.881 $\pm$ 0.185 & 0.489 $\pm$ 0.285 & No \\
\bottomrule
\end{tabular}
\end{table*}

This appendix provides additional analysis supporting the initialization sensitivity results reported in Section~\ref{sec:init-model-driven} for LLM student models. To further investigate how the optimization trajectories behave under perturbation, we track the optimization trajectory of one parameter of DistilGPT2 under varying levels of perturbation. The results are visualized in Figure~\ref{fig:param_trajectories}.

At low noise levels (1–5\%), the perturbed trajectories initially resemble the path taken by the unperturbed parameter, yet they fail to converge to the same optimum. Instead, each trajectory settles into a nearby but distinct point in parameter space. As the magnitude of noise increases, this divergence becomes more pronounced. The optimization paths deviate earlier and further, ultimately converging to substantially different optima. 

These results indicate that the knowledge distillation signal is relatively weak as a corrective force. While it provides a small force that pushes the student toward the teacher's outputs, the signal is not strong enough to dominate the influence of parameter initialization or to guarantee recovery of the teacher-aligned optimum, and small perturbations in initialization can determine which local basin the optimizer ultimately enters. As a result, initialization-driven uncertainty can persist even when supervision and optimization settings are otherwise identical, explaining the pronounced inter-student variability observed in LLM distillation in Section~\ref{sec:init-model-driven}.

\begin{figure}[t]
\centering
\includegraphics[width=0.8\linewidth]{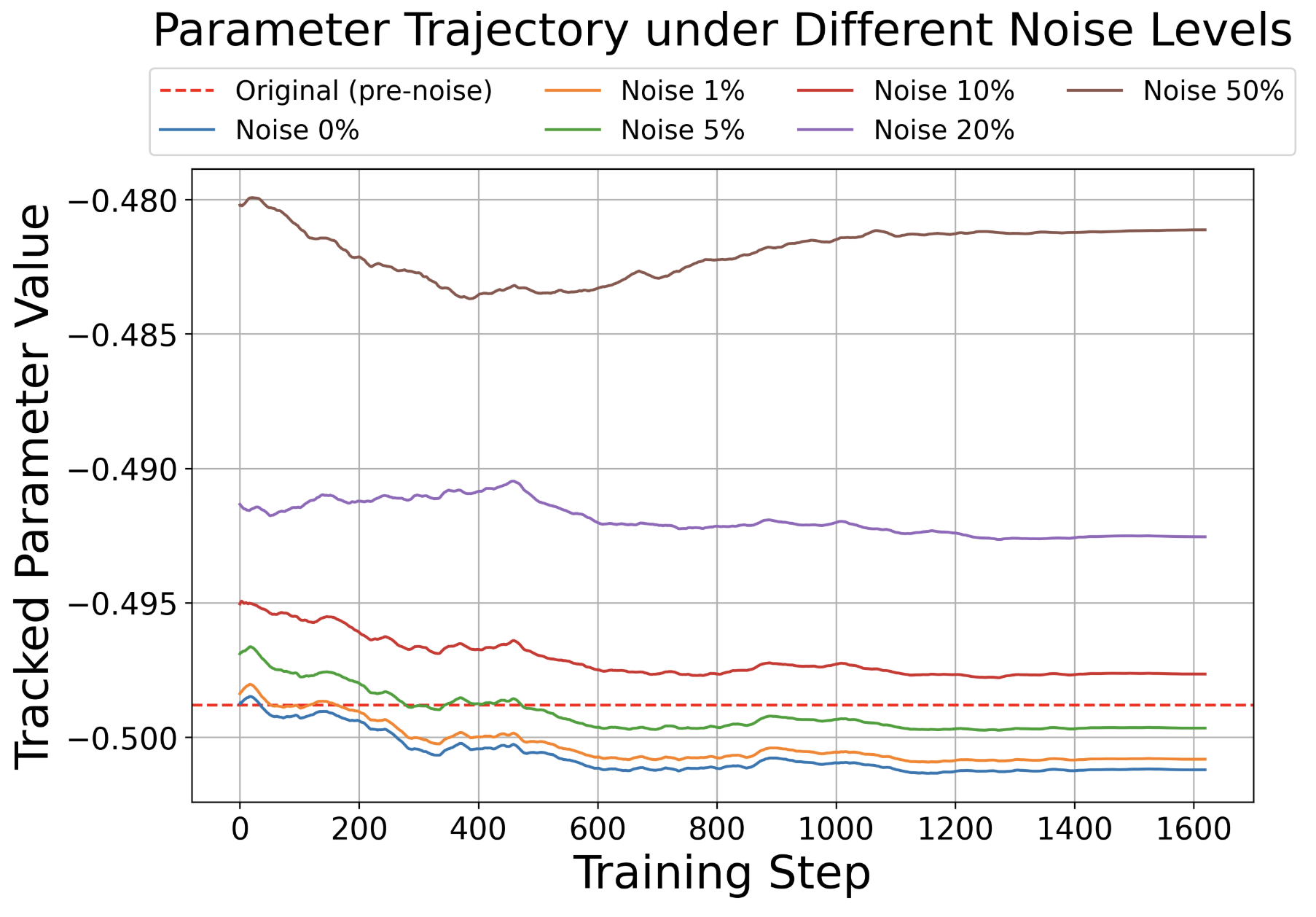}
    \caption{Parameter trajectories under different initialization noise levels during LLM distillation. Perturbed runs diverge to distinct optima.}
    \label{fig:param_trajectories}
\end{figure}

\section{Additional Experiment Setup and Results for Student Output Uncertainty}
\label{app:student_output_uncertainty_extra_datasets}

This appendix summarizes the additional experimental details and results for assessing student output uncertainty in classification tasks beyond the Digits dataset described in Section~\ref{sec:simple_student_output_uncertainty}.

\subsection{Model Details}

We tested both logistic regression and neural networks. The neural network teacher model maps the 64-dimensional inputs to a hidden layer of 128 units and then to a 10-class output; the student model has the same structure but with 64 hidden units. Both are trained using cross-entropy loss and the Adam optimizer for 300 epochs.

\subsection{Additional Datasets}
\label{app:dataset_descriptions}

All datasets are treated as supervised classification problems. Unless otherwise noted, we perform an 80/20 stratified split into training and test sets and standardize all features to zero mean and unit variance using a \texttt{StandardScaler} fitted on the training data. The description for the Digits dataset can be found in Appendix~\ref{app:datasets_all}.

\paragraph{Wine.}
The \emph{Wine} dataset~\cite{uci-wine} contains 178 examples of three different types of wine. Each example is described by 13 continuous attributes such as alcohol, ash, and flavonoid content. We treat this as a 13-dimensional, 3-class classification task.

\paragraph{Breast Cancer.}
The \emph{Breast Cancer Wisconsin (Diagnostic)} dataset~\cite{uci-breast-cancer} consists of 569 instances of digitized fine-needle aspirates of breast masses. Each instance is represented by 30 real-valued features describing characteristics of cell nuclei (e.g., radius, texture, smoothness). The labels indicate whether the tumor is benign or malignant, yielding a binary classification problem.

\paragraph{MNIST}
The \emph{MNIST} dataset~\cite{lecun-mnist} comprises 70,000 grayscale images of handwritten digits (0--9), split into 60,000 training and 10,000 test images in the canonical configuration. Each image is $28\times28$ pixels. 
In our implementation, we load the standard \texttt{torchvision} MNIST training and test sets, concatenate them, and then perform our own 80/20 stratified split. 
Images are flattened into 784-dimensional vectors and standardized. This yields a 10-class problem of greater difficulty compared to the lower-resolution Digits dataset.

\paragraph{Covertype.}
The \emph{Covertype} dataset~\cite{blackard1998covertype} involves predicting forest cover type from cartographic variables. The full dataset contains 581,012 instances described by 54 numerical and binary features (elevation, slope, wilderness area indicators, soil types, etc.) and labeled into 7 cover types. To keep computation manageable while preserving difficulty, we draw a random subsample of up to 50,000 examples. This yields a 54-dimensional, 7-class classification task.

\begin{table*}[t]
\centering
\small
\caption{Predictive entropy of teacher and student models for neural networks. Values in parentheses in the Dataset column denote accuracy. Significance is determined by overlapping intervals $H_T \pm \sigma_T$ and $H_S \pm \sigma_S$.}
\label{tab:kd_entropy_nn}
\begin{tabular}{llcccccc}
\toprule
Dataset & Group & $N_T$ & $N_S$ & $H_T \pm \sigma_T$ & $H_S \pm \sigma_S$ & Significant? \\
\midrule
\multirow{3}{*}{Wine (T=97.22\%, S=97.22\%)} & All & 36 & 36 & 0.024 $\pm$ 0.070 & 0.043 $\pm$ 0.121 & No \\
 & Correct & 35 & 35 & 0.018 $\pm$ 0.062 & 0.036 $\pm$ 0.113 & No \\
 & Incorrect & 1 & 1 & 0.230 $\pm$ 0.000 & 0.317 $\pm$ 0.000 & --- \\
\midrule
\multirow{3}{*}{Breast Cancer (T=94.74\%, S=94.74\%)} & All & 114 & 114 & 0.019 $\pm$ 0.090 & 0.019 $\pm$ 0.086 & No \\
 & Correct & 108 & 108 & 0.009 $\pm$ 0.064 & 0.006 $\pm$ 0.037 & No \\
 & Incorrect & 6 & 6 & 0.212 $\pm$ 0.204 & 0.256 $\pm$ 0.235 & No \\
\midrule
\multirow{3}{*}{Digits (T=97.22\%, S=97.22\%)} & All & 360 & 360 & 0.035 $\pm$ 0.127 & 0.044 $\pm$ 0.156 & No \\
 & Correct & 350 & 350 & 0.023 $\pm$ 0.099 & 0.035 $\pm$ 0.142 & No \\
 & Incorrect & 10 & 10 & 0.446 $\pm$ 0.252 & 0.387 $\pm$ 0.240 & No \\
\midrule
\multirow{3}{*}{MNIST (T=97.04\%, S=96.54\%)} & All & 14000 & 14000 & 0.027 $\pm$ 0.118 & 0.035 $\pm$ 0.133 & No \\
 & Correct & 13585 & 13515 & 0.018 $\pm$ 0.090 & 0.022 $\pm$ 0.100 & No \\
 & Incorrect & 415 & 485 & 0.348 $\pm$ 0.316 & 0.380 $\pm$ 0.327 & No \\
\midrule
\multirow{3}{*}{Covertype (T=80.96\%, S=79.29\%)} & All & 10000 & 10000 & 0.459 $\pm$ 0.270 & 0.221 $\pm$ 0.253 & No \\
 & Correct & 8096 & 7929 & 0.411 $\pm$ 0.257 & 0.183 $\pm$ 0.231 & No \\
 & Incorrect & 1904 & 2071 & 0.663 $\pm$ 0.221 & 0.368 $\pm$ 0.278 & No \\
\bottomrule
\end{tabular}
\end{table*}

\subsection{Experimental Results: Logistic Regression}
\label{app:results_logreg}

For each dataset, we train a logistic regression teacher on the ground-truth labels and then train a logistic regression student solely on hard pseudo-labels produced by the teacher. As described in Section~\ref{sec:simple_student_output_uncertainty}, we quantify deterministic student output uncertainty using predictive entropy on the test set. 

The full results for logistic regression are shown in Table~\ref{tab:kd_entropy_logreg}. Columns $N_T$ and $N_S$ denote the number of test examples in each group for the teacher and student, respectively. The ``Significant?'' column indicates whether the intervals $H_T \pm \sigma_T$ and $H_S \pm \sigma_S$ are disjoint; overlapping intervals are treated as evidence of no statistically significant differences. 

Across all datasets, the distilled student closely matches the teacher's test accuracy and exhibits very similar uncertainty profiles. Even in cases where the mean entropies differ slightly (e.g., MNIST and Covertype), the standard deviation is large, leading to overlapping intervals in all cases, and we therefore do not regard these differences as significant.

\subsection{Experimental Results: Neural Networks}
\label{app:results_nn}

We repeat the same experiment using one-hidden-layer neural networks as teacher and student models. Table~\ref{tab:kd_entropy_nn} reports the predictive entropy statistics for neural networks. Teacher and student accuracies are again very close on all datasets. The mean entropies differ slightly in some datasets (e.g., MNIST and Covertype), but in all cases the teacher and student intervals $H_T \pm \sigma_T$ and $H_S \pm \sigma_S$ overlap. We therefore do not find statistically compelling evidence of a systematic change in the model output uncertainty.

\begin{table}[t]
\centering
\scriptsize
\caption{Noise characteristics of LLM distillation with a single teacher response under varying teacher temperatures. Both the slope and $R^2$ decrease monotonically with increasing teacher temperature, indicating that as the teacher becomes noisier, students trained on a single teacher response struggle to reproduce the teacher's stochastic behavior.
}
\label{tab:noise_temp}
\begin{tabular}{ccccc}
\toprule
\textbf{Distillation Setup} & \textbf{Slope} & $\mathbf{R^2}$ & \textbf{Avg. Noise} & \textbf{Avg. Systematic Noise} \\
\midrule
Direct FT & 0.374 & 0.151 & 0.505 & -- \\
\midrule
Teacher Temp = 0.5 & 0.204 & 0.095 & 0.666 & 0.161 \\
Teacher Temp = 0.8 & 0.191 & 0.085 & 0.671 & 0.166 \\
Teacher Temp = 1.0 & 0.182 & 0.079 & 0.673 & 0.168 \\
Teacher Temp = 1.2 & 0.169 & 0.071 & 0.678 & 0.173 \\
Teacher Temp = 1.5 & 0.162 & 0.067 & 0.684 & 0.179 \\
Teacher Temp = 2.0 & 0.152 & 0.056 & 0.686 & 0.181 \\
\bottomrule
\end{tabular}
\end{table}

\section{Noise Transfer Analysis in LLM Distillation}
\label{app:noise_exps}

Apart from added variance and hallucination, one other potential side effect of the distillation pipeline is \emph{unwanted noise}, which is another aspect of uncertainty. To this end, we conduct experiments that measure both the overall and systematic noise transfer during distillation. Specifically, we address two questions:  
(1) How well do student models inherit the \emph{overall} noise of their teachers?  
(2) How well do they inherit the \emph{systematic} noise---noise induced by the teacher itself rather than by the data?

\subsection{Noise Decomposition and Measurement}
We decompose model noise into three components.  
\textbf{Non-systematic noise} is a source of irreducible, aleatoric uncertainty~\cite{kendall2017uncertainties} that arises from the intrinsic stochasticity of the data and the optimization process. It can be estimated by directly fine-tuning the student model on the ground-truth training data, without teacher supervision.  
\textbf{Overall noise} captures the total variability in student outputs when the student is distilled from a teacher that produces stochastic responses, which we control by varying the teacher's temperature when generating the distillation dataset.  
The \textbf{systematic noise}, a source of reducible, epistemic uncertainty~\cite{kendall2017uncertainties}, is the portion of variability introduced by the teacher and thus potentially reducible. It is then computed as
\[
\text{Systematic noise} = \text{Overall noise} - \text{Non-systematic noise}.
\]

During evaluation, for each input prompt, we generate 10 responses from both teacher and student models at temperature $0.8$ and measure the mean pairwise cosine distance between response embeddings. This quantity serves as a per-prompt measure of output variability. We then plot the teacher's noise against the student's noise across all prompts and compute the slope of the best-fit line and coefficient of determination $R^2$, which jointly quantify how closely the student reproduces the teacher's stochastic behavior.

\subsection{Failure Modes of Single-Response Distillation}

Table~\ref{tab:noise_temp} summarizes the results under single teacher output supervision. 
Avg. Noise refers to each student's average noise per prompt, while Avg. Systematic Noise is computed as the difference between a student's average noise and that of the direct fine-tune (FT) baseline.

The direct FT model establishes the non-systematic noise level and serves as a reference for the ideal amount of variability a well-calibrated student should exhibit after distillation. However, as teacher noise increases (via higher teacher temperature), all metrics—slope, $R^2$, and average noise—indicate growing divergence between teacher and student noise patterns. The average systematic noise also increases monotonically with teacher temperature. In particular, the average systematic noise increases from 0.161 at temperature 0.5 to 0.181 at temperature 2.0, corresponding to a 12.4\% increase. 

This trend is intuitive. As the teacher's predictive distribution becomes noisier and more stochastic, the single-response knowledge distillation setup exposes the student to only one random sample per input.  
Consequently, the student cannot capture the full variability of the teacher's distribution and instead internalizes an inconsistent and noisier approximation. This mismatch manifests as increasing systematic noise and weaker noise correlation as shown through lower $R^2$, showing that single-response knowledge distillation struggles to faithfully transmit uncertainty when teacher outputs are highly stochastic. This motivates the application of averaging and variance-weighting methods in Section~\ref{sec:reduce_noise_llm}.

\end{sloppy}

\end{document}